\documentclass{article}

 \usepackage[preprint]{neurips_2026}

\usepackage[utf8]{inputenc} 
\usepackage[T1]{fontenc}    
\usepackage{hyperref}       
\usepackage{url}            
\usepackage{booktabs}       
\usepackage{amsfonts}       
\usepackage{nicefrac}       
\usepackage{microtype}      
\usepackage{xcolor}         

\usepackage{graphicx}
\usepackage{subcaption}
\usepackage{wrapfig}
\usepackage{amsmath}
\usepackage{amssymb}
\usepackage{mathtools}
\usepackage{amsthm}
\usepackage{algorithm}
\usepackage{algorithmicx}
\usepackage{algpseudocode}
\theoremstyle{plain}
\newtheorem{theorem}{Theorem}[section]
\newtheorem{proposition}[theorem]{Proposition}

\theoremstyle{definition}

\theoremstyle{remark}

\title{DG-CoLearn: An Efficient Collaborative Learning Framework for Dynamic Graphs}

\author{
Au Ashley Hoi-Ting$^{1}$ \quad Zikun Zhang$^{1}$ \quad Ligang He$^{1}$ \quad Qiang Ni$^{2}$ \\
$^{1}$Department of Computer Science, The University of Warwick \\
$^{2}$School of Computing and Communications, Lancaster University \\
\texttt{\{ashley.au, zikun.zhang.1, ligang.he\}@warwick.ac.uk} \\
\texttt{q.ni@lancaster.ac.uk}
}

\begin{document}

\maketitle

\begin{abstract}
  Dynamic graph learning (DGL) is essential for modelling evolving graph data, but existing methods suffer from significant computational overhead due to repeated full-snapshot retraining and are not well-suited for collaborative settings with partitioned data. In realistic graph systems, cross-partition edges are unavoidable, but direct sharing of graph structure between clients may violate privacy constraints. We propose DG-CoLearn, a client-oblivious collaborative dynamic graph learning framework built on incremental graph snapshot processing, which focuses computation on graph regions affected by temporal updates while preserving historical information through temporal modelling. This incremental design is consistently applied across the entire graph processing pipeline, including a server-mediated embedding exchange mechanism to enable accurate multi-hop message passing without exposing raw cross-client structural information. Extensive experiments demonstrate that DG-CoLearn achieves up to 33.8$\times$ speedup in training time and 27.4$\times$ reduction in communication overhead, while consistently improving predictive performance on both node classification (up to 13.36\% F1 improvement) and link prediction (up to 8.27\% MAP improvement) tasks. These results highlight the effectiveness of DG-CoLearn in bridging efficiency, scalability, and client-to-client structural privacy in collaborative dynamic graph learning.
\end{abstract}

\section{Introduction}

Dynamic graphs, where nodes, edges, and their features evolve over time, are prevalent in many real-world applications such as social networks, biology science and financial data analysis \cite{applications}. For example, social network platforms such as Instagram and Tiktok collect and send user interactions to a centralized server. These data naturally form a dynamic graph as new interactions are introduced. Learning meaningful representations from these evolving structures, commonly refered to Dynamic Graph Learning (DGL), is critical for tasks including link prediction and node classification. 
However, centralized DGL faces significant challenges. Frequent graph updates require repeated processing of large and evolving graph snapshots, leading to prohibitive computational and memory cost on the server. 

Collaborative learning offers a promising solution by distributing computation across multiple clients \cite{FederatedScope, FedSage}. In graph settings, this typically takes the form of \textit{intra-graph distributed learning}, where a central server partitions data into client-specific subgraphs and coordinates distributed learning. While this paradigm improves scalability, it introduces a unique tension between efficiency, structural dependency, and privacy. In particular, real-world graphs are inherently interconnected: cross-partition edges are unavoidable and carry essential information for accurate learning. At the same time, privacy constraints often prohibit clients from accessing cross-partition connectivity, creating incomplete local views and hindering effective message passing.


Existing approaches to collaborative or federated graph learning primarily focus on static graphs and assume either (i) disjoint client subgraphs without cross-client edges, or (ii) full visibility of cross-client connectivity \cite{fedgraph, llcg}. These assumptions limit their applicability in realistic dynamic settings. Moreover, prior work largely adopts the classical federated learning (FL) privacy model, where the central server is treated as untrusted and the primary objective is to prevent data leakage to the server \cite{MIA1}. However, this model does not capture an important class of real-world systems, such as financial platforms or infrastructure networks, where a central coordinator necessarily maintains global topology, and the primary privacy risk lies in preventing \textbf{information leakage between clients} rather than from clients to the server.


In this work, we formalize and study this alternative setting, which we term \textbf{client-oblivious collaborative graph learning}. In this regime, a (trusted or secured) central coordinator maintains the global graph structure and mediates communication, while clients are restricted to their local subgraphs and must remain oblivious to cross-client connectivity. The key privacy objective is thus \textit{horizontal} privacy: ensuring that no client can infer the structure, features, or labels of another client’s subgraph, even in the presence of cross-partition dependencies. This setting introduces new challenges that are not addressed by existing FL frameworks, particularly in dynamic graphs where structural changes are frequent and localized.

Additionally, a fundamental inefficiency arises in existing dynamic graph learning methods, both centralized and collaborative, which typically retrain models over entire graph snapshots at each timestamps. This design is particularly wasteful in evolving graphs, where updates are often sparse and localized. In collaborative settings, this inefficiency is further amplified by repeated cross-client communication and redundant recomputation, making existing approaches impractical at scale.

To address these challenges, we propose \textbf{DG-CoLearn}, a collaborative learning framework for dynamic graphs built on a unifying principle: \textbf{incremental snapshot processing}. Instead of retraining on full graph snapshots, DG-CoLearn selectively updates only the nodes and edges affected by temporal changes, while preserving historical representations for unchanged regions. This principle permeates all stages of the framework: graph partitioning is restricted to affected regions, clients perform lightweight local training on updated subgraphs, and cross-client node embedding computation are limited to newly influenced nodes.


Building on this design, DG-CoLearn achieves both efficiency and effectiveness. By exploiting temporal locality, it significantly reduces computation and communication overhead, while its privacy-aware aggregation mechanism enables accurate modelling of cross-client dependencies. Extensive experiments on multiple dynamic graph benchmarks demonstrate that DG-CoLearn achieves substantial speedups over full-snapshot retraining while consistently improving predictive performance on both link prediction and node classification tasks.
Our contributions are summarized as follows:
\begin{itemize}
    \item We formalise client-oblivious collaborative graph learning, a practically motivated privacy regime that focuses on preventing cross-client structural leakage under a centralized coordinator.
    \item We propose a novel duo-stage graph partitioning algorithm tailored to collaborative graph learning, reducing cross-client communication while maintaining balanced and informative subgraphs for effective and generalizable training on both node classification and link prediction tasks.
    \item We incorporate a lightweight local training scheme for clients to incrementally perform dynamic graph learning while preserving historical knowledge.
    \item We design a server-mediated cross-client node knowledge exchange mechanism that derives global representations without raw data exchange.
\end{itemize}

\section{Related Work} \label{sesh:relatedwork}

\textbf{Dynamic Graph Learning.} Centralized DGL has been studied through both discrete-snapshot models \cite{EvolveGCN, DMGCN, DyGFormer} and continuous-time (CT) formulations such as TGAT \cite{TGAT}, TGN \cite{TGN}, and DyGFormer \cite{DyGFormer}, which maintain per-node memory states and attend over temporal neighbourhoods. Among snapshot-based approaches, ROLAND \cite{ROLAND} decouples temporal modelling from graph structure by hierarchically updating node states across snapshots via a GRU, avoiding sequential retraining over all historical snapshots. DG-CoLearn builds on ROLAND's state-update perspective but further observes that, if node states are updated incrementally, unchanged nodes need not be retrained at all, \textbf{motivating our incremental snapshot processing under collaborative and privacy-constrained settings}. CT methods are complementary: their backbone-agnostic encoders can be composed as local models within DG-CoLearn, as demonstrated in Section \ref{sesh:experi_result}. From a systems perspective, BLAD and ESDG \cite{BLAD, ESDG} reduce communication by grouping snapshots onto single GPUs, but this assumes batch access to multiple snapshots and scales poorly as individual snapshots grow large, limiting applicability to streaming or rapidly evolving graphs.

\textbf{Collaborative and Federated Graph Learning.} Early federated graph methods such as FedSage, FedEgo, and FedGCN \cite{FedSage, FedEgo, FedGCN} primarily focus on static graphs, and either assume disjoint client graphs or require explicit sharing of structural information. FedDGL \cite{FedDGL} extends FL to dynamic graphs via knowledge distillation but inherits the disjoint-subgraph assumption, ignoring cross-client edges entirely. Methods that do address cross-client dependencies, FedSage+ \cite{FedSage} via missing-neighbour generation, LLCG \cite{llcg} via server-side global correction, recover structural signal but rely on expensive generative reconstruction or repeated full-graph processing, neither of which scales to dynamic graphs with frequent updates.

\textbf{Privacy in Federated Graph Learning.} Privacy preservation in FL has primarily been studied under untrusted-server assumptions, with secure aggregation and differential privacy aimed at preventing the server from recovering individual updates. However, gradient inversion attacks \cite{GIA1, GIA2, GIA3} have shown that training data can still be reconstructed from shared gradients, and membership inference attacks have been extended to graph neural networks \cite{MIA2, MIA4, MIA1, MIA3}, where node or edge membership can be inferred from model outputs or embeddings. While differentially private federated graph learning frameworks \cite{DPDGL} introduce noise or restrict information sharing to protect client data, they do not explicitly consider cross-client structural leakage in partitioned graph settings. In contrast, DG-CoLearn considers a different privacy axis, preventing \textbf{client-to-client} structural inference under a trusted coordinator, which is practically motivated by systems where a central operator necessarily maintains global topology and the primary risk lies between clients rather than between clients and server.

\textbf{Graph Incremental Learning.} Recent studies distinguishes graph incremental learning (GIL) from dynamic graph learning (DGL) \cite{ngil}, and we maintain this distinction. GIL methods such as DyGRAIN, TWP, and ER-GNN \cite{DyGRAIN, TWP, ER-GNN} target catastrophic forgetting as new graph data arrives, whereas DGL, the focus of this work, aims to capture temporal dynamics. Although mitigating catastrophic forgetting is not our central objective, we additionally evaluated its ability to retain historical knowledge in Appendix \ref{appen:catforget}.

\section{Problem Formulation} \label{sesh:formulation}

\subsection{Dynamic Graphs} 
We consider a dynamic graph dataset $\mathcal{D} = \{G_t\}_{t=1}^T$ , consisting a sequence of temporal graph snapshots. At each timestamp $t$, the snapshot is defined as $$G_t = (V_t, E_t, X_t, Y_t),$$
where $V_t$ and $E_t$ are the set of nodes and edges respectively, $X_t \in \mathbb{R}^{|V_t| \times d}$ represents the raw node features, and $Y_t$ is the available node labels for the learning task. The graph topology is encoded by an adjacency matrix $\mathbf{Adj_t} \in \{0,1\}^{|V_t|\times |V_t|}$. 

Following the temporal graph formulation in \cite{temporalpart}, we define the delta graph $$\Delta G_{(t,t+1)} = (\Delta V_{(t,t+1)}, \Delta E_{(t,t+1)}),$$
where $\Delta V_{(t,t+1)}$ contains newly arrived nodes between timestamps $t$ and $t+1$, and $\Delta E_{(t,t+1)}$ contains both new edges connecting existing nodes, and edges connecting two new vertices.

\subsection{Collaborative Intra-graph learning}
We study a collaborative intra-graph learning paradigm, where
a global graph at each timestamp is maintained by a central server and partitioned into multiple subgraphs, which are distributed to a set of clients. Each client holds one subgraph and performs local GNN training.


At each timestamp $t$, the server partitions the global node set $V_t$  into $M$ disjoint client subgraphs:
$$
V_t = \bigcup_{m=1}^M V_t^{(m)}, \qquad 
V_t^{(i)} \cap V_t^{(j)} = \emptyset \text{ for } i \neq j,
$$
with corresponding subgraphs
$$
G_t^{(m)} = \big(V_t^{(m)}, E_t^{(m)}, X_t^{(m)}, Y_t^{(m)}\big),
$$
where $E_t^{(m)} = \{(u,v) \in E_t |\ u,v \in V^{(m)}_t\}$ contains only the edges for which both endpoints lie in $V_t^{(m)}$. The corresponding local adjacency matrix held by client $m$ is $\mathbf{Adj_t}^{(m)} \in \{0,1\}^{|V_t^{(m)}| \times |V_t^{(m)}|}$, which encodes only intra-client edges.

To enable message passing across partitions, a central coordinator maintains global structural information and facilitates controlled information exchange between clients.

The learning objective is to jointly optimize local models across clients while preserving performance comparable to centralized training under computational and communication constraints.

\subsection{Privacy Model and Threat Assumptions}
DG-CoLearn operates under a \emph{client-oblivious trusted-server} threat model.
The server holds the full adjacency matrix $\mathbf{Adj_t}$ and orchestrates partitioning, model aggregation, and cross-client node embedding exchange. It is assumed to execute the protocol faithfully and does not attempt to infer private client data beyond what is required for aggregation. Each client $m$ observes only its local subgraph $G_t^{(m)}$, the aggregated global model parameters, and server-computed embedding corrections for its border nodes. Crucially, any cross-client edges connecting $V_t^{(k)}$ to $V_t^{(j)}$ ($j \neq k$) are known exclusively to the server. Clients are unaware of whether their local nodes connect to nodes owned by other clients, and must train using incomplete neighbourhood information.


The privacy goal is \emph{client-to-client structural privacy}: no client should be able to reconstruct or infer the subgraph structure, node features, or labels of any other client from the information it receives. Appendix~\ref{appen:privacy} provides a formal analysis showing that the server-computed embedding corrections received by each client are provably insufficient for reconstructing another client's private subgraph, even under multi-client collusion. This regime differs from classical federated learning, which targets an untrusted server via secure aggregation or differential privacy, techniques that do not directly prevent inference of cross-client graph structure under partitioned graphs with shared coordination.

\section{The DG-CoLearn Framework}

At each timestamp $t$, DG-CoLearn executes a three-stage pipeline:

\textbf{(i) Incremental graph partitioning (§\ref{sesh:gpa})}: the server applies \textit{CoLearnPartition} to assign newly arriving nodes while preserving historical partitions, minimizing cross-client edges and maintaining balanced, learning-effective subgraphs.

\textbf{(ii) Lightweight local training (§\ref{sesh:only_learn_new})}: each client update its GNN model using only a $k$-hop subgraph induced by locally affected nodes, while a temporal module preserves historical representations without reprocessing unchanged regions.

\textbf{(iii) Server-mediated node knowledge aggregation (§\ref{sesh:exchange_ne})}: clients submit updated local models together with embeddings of boundary nodes involved in cross-client dependencies; the server incrementally reconstructs exact multi-hop node embeddings using global topology without exposing cross-client connectivity.

By explicitly exploiting temporal locality and structural sparsity of updates, DG-CoLearn enables scalable, privacy-preserving collaborative learning on evolving graphs, avoiding full-snapshot retraining and expensive generative reconstruction while achieving centralized-learning accuracy.

\subsection{CoLearnPartition: Duo-Stage Temporal Graph Partitioning Algorithm} \label{sesh:gpa}

To support scalable collaborative learning on evolving graphs, DG-CoLearn requires a partitioning algorithm that (i) operates incrementally over temporal snapshots, (ii) preserves partition consistency over time, and (iii) facilitates effective downstream GNN training under client-to-client privacy constraints. Existing temporal partitioning methods \cite{temporalpart} primarily optimize structural objectives such as minimizing edge cuts or balancing partition size, but do not explicitly account for learning effectiveness or computation cost in graph partitioning.

We therefore propose \textbf{CoLearnPartition}, a lightweight duo-stage incremental vertex partitioning algorithm tailored for collaborative DGL. CoLearnPartition assigns newly arriving nodes while keeping historical assignments fixed, avoiding costly repartitioning at each timestamp. It jointly optimizes three objectives: (1) minimizing cross-client edges, to reduce embedding exchange overhead; (2) balancing partition edge volume, to ensure comparable computational load across clients; and (3) balancing node label distributions, to promote stable and generalizable collaborative training.

CoLearnPartition enforces these objectives through a two-stage decision process: a \textit{primary stage} that applies a hard constraint on a dominant objective, followed by \textit{a secondary refinement stage} that softly optimizes the remaining criteria. We empirically evaluate different stage configurations in Appendix \ref{appen:gpa_experiments} and find that prioritizing cross-client edge minimization in the primary stage, followed by the other two objectives in the secondary stage, yields the most effective trade-off between communication efficiency and learning performance. This design enables efficient partition updates that directly align partition structure with collaborative learning efficiency.

At each snapshot $t$, CoLearnPartition searches for a valid partition from snapshot $t-1$, and incrementally assign newly arrived nodes $\Delta V_{(t-1,t)}$ based on their 1-hop and 2-hop neighbourhood to the existing partition. Otherwise, the graph will be partitioned according to Appendix \ref{appen:gpa}. Note that the border nodes, defined as nodes with neighbours in other subgraphs after the initial partition, play a key role in cross-partition interactions.
Unlike snapshot partitioning \cite{BLAD}, which segments the graph by time and creates redundant copies, our vertex partitioning preserves enables fine-grained, communication-efficient parallelism and better supports node-centric collaborative learning tasks. The detailed explanation of CoLearnPartition is provided in Appendix \ref{appen:gpa}.

\subsection{Lightweight Dynamic GNN Learning} \label{sesh:only_learn_new}

As discussed earlier in Section \ref{sesh:relatedwork}, our design is inspired by ROLAND, which models temporal dependencies by updating node states across snapshots using a GRU. This formulation implies that historical node states are already preserved in the recurrent state. We therefore observe that, especially when graph updates are sparse, it is unnecessary to retrain over the entire snapshot.

\begin{figure*}[t]
  \centering
  \includegraphics[width=\textwidth]{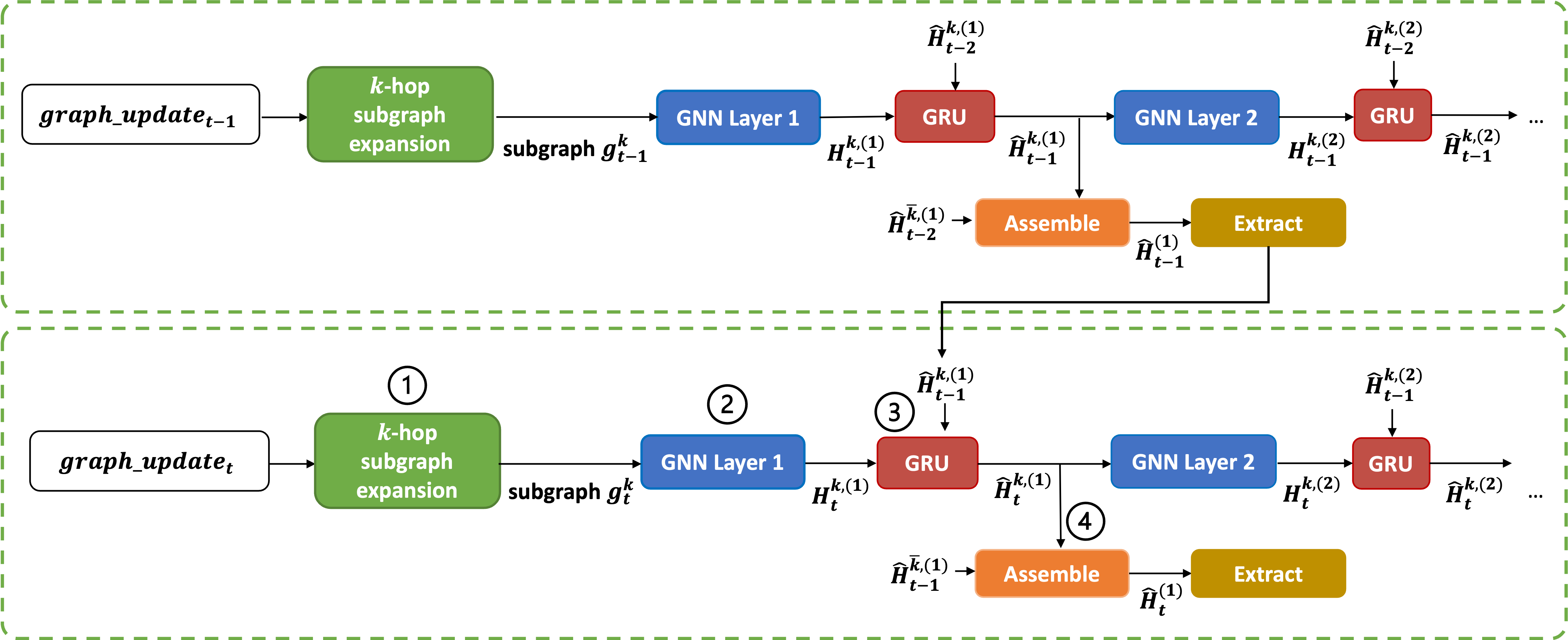}
  \caption{Architecture of DG-CoLearn. The key four steps are labelled on the processing of snapshot $t$ (bottom dotted box).}
  \label{fig:dg-colearn}
\end{figure*}

DG-CoLearn enables each client to perform \textbf{incremental local training}, where the learning is restricted to subgraphs induced by affected nodes while integrating historical representations for unchanged regions. Figure \ref{fig:dg-colearn} illustrates the architecture which proceeds in four steps:

\begin{enumerate}
    \item \textbf{$k$-hop subgraph expansion.} Given the delta graph $\Delta G_{(t-1,t)}$, the client extracts the $k$-hop induced subgraph \begin{equation}\label{eq:learning1}
    g^k_t = G_t[\mathcal{V}(g_t^k) ], \quad  \mathcal{V}(g_t^k) = \bigcup_{v \in \Delta V_{(t-1,t)}} \mathcal{N}_{G_t}^k(v),
    \end{equation} where $k$ matches the number of GNN layers and $\mathcal{N}_{G_t}^k(v)$ denotes the $k$-hop neighbourhood of node $v$ in $G_t$. This restricts computation to nodes whose representations can be affected by the update.
    
    \item \textbf{GNN layer.} $g^k_t$ is passed through the $l$-th GNN layer. For each node $v \in g^k_t$, the layer computes \begin{equation} \label{eq:learning2}
    \scalebox{0.9}{$
    H^{k,(l)}_{t,v} = \sigma\left( W^{(l)} \cdot \text{AGG}^{(l}\left( \left\{ H^{(l-1)}_{t,u} \mid u \in \mathcal{N}(v) \cup {\{v\}} \right\} \right) \right)
    $}\end{equation} and $\mathbf{H}^{k,(l)}_t = \bigcup_{v \in g^k_t} H^{k,(l)}_{t,v}$ collects the resulting embeddings.

    \item \textbf{GRU temporal update.} A GRU captures the temporal relationships between consecutive graph updates by fusing the new layer-$l$ embeddings with the previous snapshot's hidden states for the nodes in $g^k_{t}$: \begin{equation} \label{eq:learning3}
    \widehat{\textbf{H}}^{k,(l)}_t = \text{GRU}\left( \textbf{H}^{k,(l)}_t, \textbf{H}^{k,(l)}_{t-1} \right)
    \end{equation}

    \item \textbf{Assembling the full hidden states.} Since nodes outside $g^k_{t}$ are unchanged, their hidden states from $t-1$ (i.e. $\widehat{\textbf{H}}^{\overline{k},(l)}_{t-1} = \widehat{\textbf{H}}^{(l)}_{t-1} \setminus \widehat{\textbf{H}}^{k,(l)}_{t-1}$) are directly reused: 
    \begin{equation} \label{eq:learning4}
    \widehat{\textbf{H}}^{(l)}_t = \widehat{\textbf{H}}^{k,(l)}_t \cup \widehat{\textbf{H}}^{\overline{k},(l)}_{t-1}
    \end{equation}
\end{enumerate}

These steps are repeated $k$ times until subgraph $g^k_{t}$ has been processed through all $k$ GNN layers. Full per-step equations and aggregation details are provided in Appendix \ref{appen:learning}.

DG-CoLearn improves upon centralized baselines by enabling localized updates and reducing computation via decentralized, $k$-hop subgraph processing. Compared to knowledge distillation methods such as DyGRAIN \cite{DyGRAIN}, which rely on full-graph access to mitigate forgetting, DG-CoLearn preserves historical information through node state reuse and incrementally learns newly updated graph elements. As a result, DG-CoLearn is a better suit for real-world, resource-constrained collaborative settings.

\subsection{Exchanging Node Knowledge across Clients} \label{sesh:exchange_ne}

In collaborative intra-graph learning, clients train GNNs on disjoint subgraphs and are unaware of cross-client dependencies. While this preserves privacy, it prevents clients from accurately incorporating information from cross-client neighbours, leading to incomplete message passing. DG-CoLearn addresses this challenge through a \textbf{server-mediated node knowledge exchange mechanism} that reconstructs \textit{exact} embeddings of affected nodes without revealing cross-client edges to clients.

\begin{wrapfigure}{r}{0.25\textwidth}
  \centering
  \includegraphics[width=\linewidth]{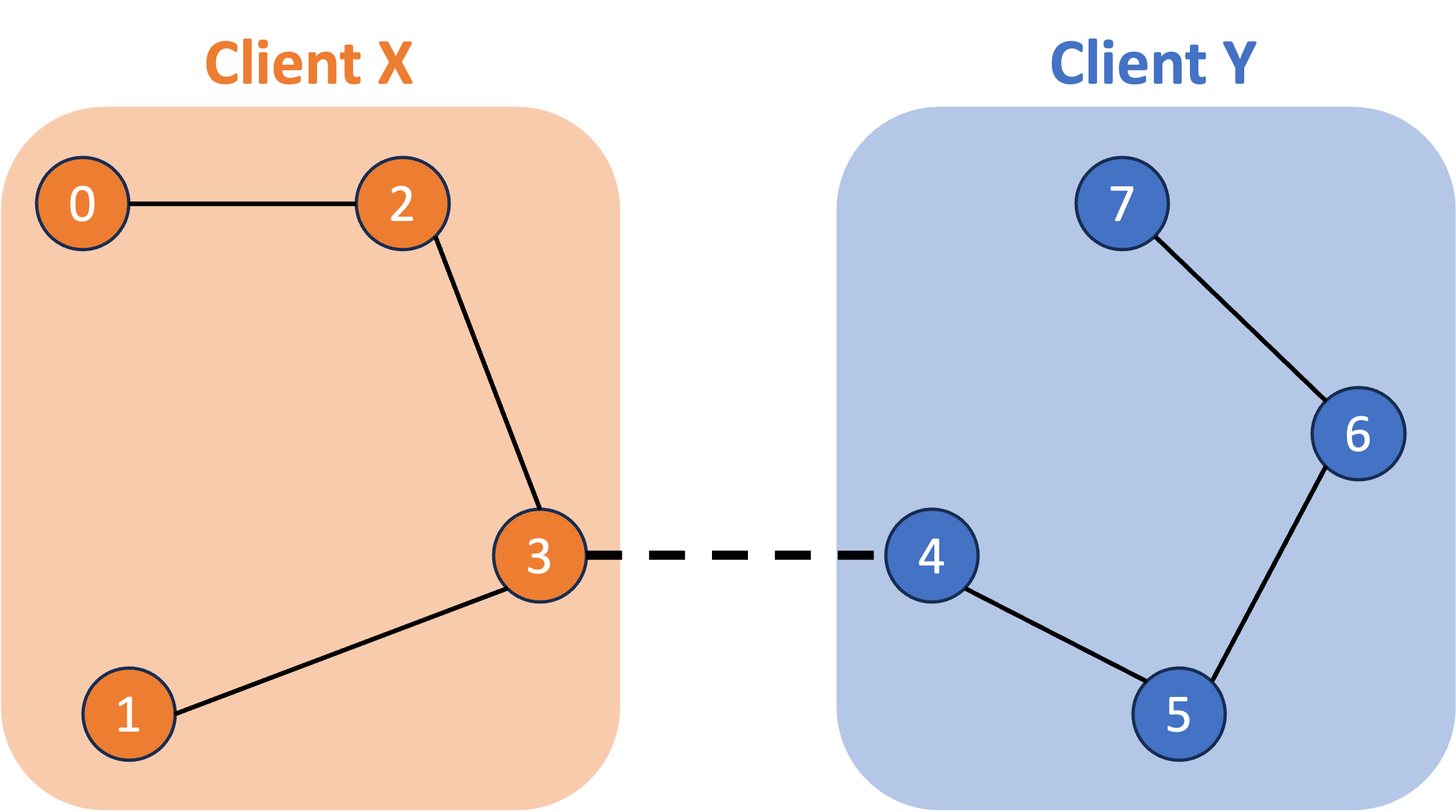}
  \caption{Example graph with a cross-client edge $(3,4)$, where $\widehat{\mathcal{N}}_3 = \{4\}$ and $\widehat{\mathcal{N}}_4 = \{3\}$}
  \label{fig:cce}
\end{wrapfigure}
Figure \ref{fig:cce} illustrates a cross-client edge connecting node 3 (in client X) and node 4 (in client Y). We denote $\widehat{\mathcal{N}}_i$ as the set of nodes that are the cross-client neighbours of node $i$.

Crucially, DG-CoLearn performs node embedding exchange \textbf{incrementally.} At snapshot $t$, only nodes in the $k$-hop induced subgraph $g_t^k$ defined in Equation \ref{eq:learning1} require embedding correction. This incremental exchange strategy reduces communication complexity from $O(|V_t|)$ per snapshot to $O(|\Delta V_{(t-1,t)}|)$, while preserving message passing accuracy. 

FedGCN \cite{FedGCN} reconstructs cross-client information by summing locally aggregation node features across clients. While being communication efficient, this approach approximates message passing and overlooks interactions among cross-client neighbours, leading to inaccurate multi-hop embeddings.


\subsubsection{Computing exact node embeddings}
From Equation \ref{eq:learning2}, we denote $H^{k,(l)}_{t,v}$ as node $v$'s embedding after the $l^{th}$ GNN layer.

As discussed in FedGCN \cite{FedGCN}, communicating with $(L>2)$-hop neighbours is infeasible because of potential data leakage and the coverage of the entire graph, incurring prohibitive communication and computation costs. Thus, we restrict ourselves to 2-hop communication as well and derive the method for computing the 1-hop and 2-hop node embeddings.

The server, aware of the cross-client edges, computes and redistributes \textbf{exact} 1-hop and 2-hop embeddings for each node via the following theorem. The complete proof of the theorem will be provided in Appendix \ref{appen:proof}.

\begin{theorem} \label{thm:thm1}
    The exact 1-hop global node embeddings for node $i$, denoted by $H^{*,(1)}_t(i)$, can be computed by:
    \begin{equation} \label{eq:glob1}
        H^{*,(1)}_{t,i} = H^{k,(1)}_{t,i} + \sum_{j \in \widehat{\mathcal{N}}_i} H^{(0)}_{t,j}
    \end{equation}
    and the 2-hop global node embeddings $H^{*,(2)}_t(i)$ of node $i$ can be computed by:
    \begin{equation} \label{eq:glob2}
\begin{aligned}
H^{*,(2)}_{t,i}
&=
H^{k,(2)}_{t,i}
+
\sum_{j \in \widehat{\mathcal{N}}_i}
\big(
H^{k,(1)}_{t,j}
+
H^{(0)}_{t,j}
+
H^{(0)}_{t,i}
\big) +
\sum_{q \in \widehat{\mathcal{N}}_j}
H^{(0)}_{t,q},
\end{aligned}
\end{equation}

where $H^{(0)}_{t,v}$ as its initial node feature of node $v$.

\end{theorem}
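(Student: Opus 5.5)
The plan is to read the layer in Equation~\ref{eq:learning2} the way FedGCN-style exact-reconstruction arguments do. I will take $\mathrm{AGG}^{(l)}$ to be an unnormalised sum over $\mathcal{N}(v)\cup\{v\}$. I will also absorb (or defer) $W^{(l)}$ and $\sigma$ so that each layer is linear in the previous layer's embeddings. Without this linearity, additive corrections such as Equations~\ref{eq:glob1}--\ref{eq:glob2} cannot be exact, so I will state it explicitly as the standing assumption of the proof.

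The basic structural fact comes from the disjoint partition $V_t=\bigcup_m V_t^{(m)}$ and the definition of $E_t^{(m)}$. Together they show that, for $i\in V_t^{(m)}$, the global neighbourhood splits disjointly as $\mathcal{N}_{G_t}(i)=\mathcal{N}_{G_t^{(m)}}(i)\,\dot\cup\,\widehat{\mathcal{N}}_i$. The local client only sees the first part, together with the self term.

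For the 1-hop claim I will expand the global sum over $\mathcal{N}_{G_t}(i)\cup\{i\}$ and split it along this decomposition. The part over $\mathcal{N}_{G_t^{(m)}}(i)\cup\{i\}$ is exactly what the client computes, namely $H^{k,(1)}_{t,i}$. Here I use that $g_t^k$ contains the full local $k$-hop neighbourhood of every affected node, by Equation~\ref{eq:learning1}. The remaining part is $\sum_{j\in\widehat{\mathcal{N}}_i}H^{(0)}_{t,j}$, which the server can evaluate because it holds $\mathbf{Adj_t}$ and the raw features. This gives Equation~\ref{eq:glob1}.

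For the 2-hop claim I will write
\[
H^{*,(2)}_{t,i}=\sum_{u\in\mathcal{N}_{G_t}(i)\cup\{i\}}H^{*,(1)}_{t,u}
\]
and substitute the 1-hop identity for every $u$. This produces four groups of terms:
\begin{enumerate}
\item $u$ local (including $u=i$), with its local 1-hop sum; these recombine into $H^{k,(2)}_{t,i}$.
\item $u$ local, with its own cross-client correction.
\item $u=j\in\widehat{\mathcal{N}}_i$, with its local part $H^{k,(1)}_{t,j}$.
\item $u=j\in\widehat{\mathcal{N}}_i$, with its cross-client part $\sum_{q\in\widehat{\mathcal{N}}_j}H^{(0)}_{t,q}$. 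Since $i\in\widehat{\mathcal{N}}_j$, this part contains $H^{(0)}_{t,i}$, plus the second-hop nodes $q$.
\end{enumerate}
I will then match these groups term-by-term against Equation~\ref{eq:glob2}: the sum over $j$ of $H^{k,(1)}_{t,j}$, the $H^{(0)}_{t,i}$ contributions, and the trailing $\sum_{q\in\widehat{\mathcal{N}}_j}H^{(0)}_{t,q}$.

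The main obstacle is this bookkeeping step. Making the right-hand side of Equation~\ref{eq:glob2} agree with the expansion requires fixing conventions that the statement leaves implicit. Specifically, I need to decide:
\begin{itemize}
\item whether $H^{k,(1)}_{t,j}$ already contains the self term $H^{(0)}_{t,j}$;
\item whether $\widehat{\mathcal{N}}_j$ in the last sum excludes $i$, since $H^{(0)}_{t,i}$ is listed separately;
\item how group (2), the cross-client neighbours of $i$'s local neighbours, is accounted for. It is not visible in the displayed formula, so it must either be folded into the server's correction of those neighbours' 1-hop embeddings before the client's second layer, or be excluded by the 2-hop communication restriction.
\end{itemize}
I will first try the reading in which the server pushes the corrected $H^{*,(1)}$ back to clients before layer two. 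Under that reading group (2) is already inside the client's recomputed $H^{k,(2)}_{t,i}$. I will then fix the self-loop and $i\notin\widehat{\mathcal{N}}_j$ conventions so that the remaining groups coincide with Equation~\ref{eq:glob2} exactly.

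Finally, for nodes outside $g_t^k$ I will note that Equation~\ref{eq:learning4} reuses their states, and that no term in the expansion changes for them. This justifies restricting the correction to the affected region.
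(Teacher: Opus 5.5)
Your framework matches the paper's. Its proof also drops $W^{(l)}$ and $\sigma$, working with $\mathbf{Z}^{(1)}=\mathbf{Adj}\,\mathbf{Z}^{(0)}$ and $\mathbf{Z}^{(2)}=\mathbf{Adj}^2\mathbf{Z}^{(0)}$ for a self-looped adjacency; you are right to state this assumption explicitly. Your 1-hop split is its $\mathbf{Adj}_i=\mathbf{Adj}^{(s)}_i+\mathbf{c}_i^\top$. Your groups (1), (2), (3)+(4) are the three terms of $\mathbf{Adj}_i\mathbf{Adj}=\mathbf{Adj}^{(s)}_i\mathbf{Adj}^{(s)}+\mathbf{Adj}^{(s)}_i(\mathbf{Adj}-\mathbf{Adj}^{(s)})+\mathbf{c}_i^\top\mathbf{Adj}$.

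The gap is the final 2-hop matching, and no natural choice of conventions closes it, because under this model Eq.~\eqref{eq:glob2} is not an identity. Write $\mathcal{N}^{(s)}(i)$ for the neighbours of $i$ inside its own client $s$. Your group (2) equals $\sum_{u\in\mathcal{N}^{(s)}(i)\cup\{i\}}\sum_{v\in\widehat{\mathcal{N}}_u}H^{(0)}_{t,v}$. The paper's proof asserts that the corresponding term $\mathbf{Adj}^{(s)}_i(\mathbf{Adj}-\mathbf{Adj}^{(s)})$ is the zero vector, and that assertion is false. The product contracts the entries of $\mathbf{Adj}^{(s)}_i$, which are supported on $\mathcal{N}^{(s)}(i)\cup\{i\}\subseteq s$, against the \emph{rows} of $\mathbf{Adj}-\mathbf{Adj}^{(s)}$. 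For $u\in s$ that row is $\mathbf{c}_u^\top$, which is nonzero for every border node and in particular for $u=i$.

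The explicit $H^{(0)}_{t,j}$ in Eq.~\eqref{eq:glob2} is exactly the $u=i$ piece, coming from the self-loop at $i$; the paper reinstates it through its informal ``visit'' argument. The $u\in\mathcal{N}^{(s)}(i)$ piece appears nowhere. Concretely, take client $X$ holding nodes $1,2$ with edge $(1,2)$, client $Y$ holding nodes $3,4$ with no edge between them, and cross-client edges $(1,3)$ and $(2,4)$. Write $x_v=H^{(0)}_{t,v}$. For $i=1$ the exact value is $3x_1+2x_2+2x_3+x_4$. Eq.~\eqref{eq:glob2}, with the $q$-sum nested and taken over $\widehat{\mathcal{N}}_3\setminus\{1\}=\emptyset$, gives $(2x_1+2x_2)+(x_3+x_3+x_1)$. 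It misses $x_4$, the cross-client neighbour of the local neighbour $2$.

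Neither of your proposed reconciliations rescues this. The 2-hop restriction cannot discard group (2), since those nodes are two hops from $i$. The push-back reading absorbs all of group (2) into the recomputed $H^{k,(2)}_{t,i}$, including the $u=i$ piece $\sum_{j\in\widehat{\mathcal{N}}_i}H^{(0)}_{t,j}$. The explicit $H^{(0)}_{t,j}$ is then counted twice; the example yields $3x_1+2x_2+3x_3+x_4$. Push-back is also not the paper's protocol: clients submit local 1-hop and 2-hop embeddings together, and $\delta^{(2)}_i$ is built from the uncorrected $H^{k,(1)}_{t,j}$.

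Self-loop conventions do not help either. Dropping self-loops removes the $u=i$ piece and leaves $H^{(0)}_{t,j}$ with nothing to match. Letting $H^{k,(1)}$ omit its self term would make the push-back version match, but breaks Eq.~\eqref{eq:glob1} by $H^{(0)}_{t,i}$. The displayed formula does become exact if the client's second layer uses server-corrected layer-1 embeddings for its neighbours but its own uncorrected one. That is an ad hoc protocol the paper never describes.

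What your expansion actually establishes is the following, without push-back and with the $q$-sum nested inside the $j$-sum over $\widehat{\mathcal{N}}_j\setminus\{i\}$. $H^{*,(2)}_{t,i}$ equals the right-hand side of Eq.~\eqref{eq:glob2} plus $\sum_{u\in\mathcal{N}^{(s)}(i)}\sum_{v\in\widehat{\mathcal{N}}_u}H^{(0)}_{t,v}$. This reduces to Eq.~\eqref{eq:glob2} when no local neighbour of $i$ has a cross-client neighbour. State and prove that version rather than searching for conventions.

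A smaller point: $g^k_t$ contains the full neighbourhoods of the nodes in $\Delta V_{(t-1,t)}$, but nodes near its boundary have truncated neighbourhoods there. So argue on the full snapshot, as the paper does, rather than through $g^k_t$.
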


Using the derived formulae, we proposed to have clients share their 1-hop and 2-hop node embeddings to the server to obtain the exact node embeddings for their own cross-client nodes. Figure \ref{fig:ne-exchange}a illustrates the four stages of the proposed scheme. The reconstructed embeddings are then privately returned to the original client as a message structured in Figure \ref{fig:ne-exchange}b. Specifically, the formulae encode the additional node embeddings required from other clients for each cross-client node $i$ in addition to their locally learnt embeddings (i.e. $H^{*,(1)}_{t,i}-H^{k,(1)}_{t,i}$ and $H^{*,(2)}_{t,i}-H^{k,(2)}_{t,i}$).

\begin{figure*}[t]
  \centering
  \includegraphics[width=1.0\textwidth]{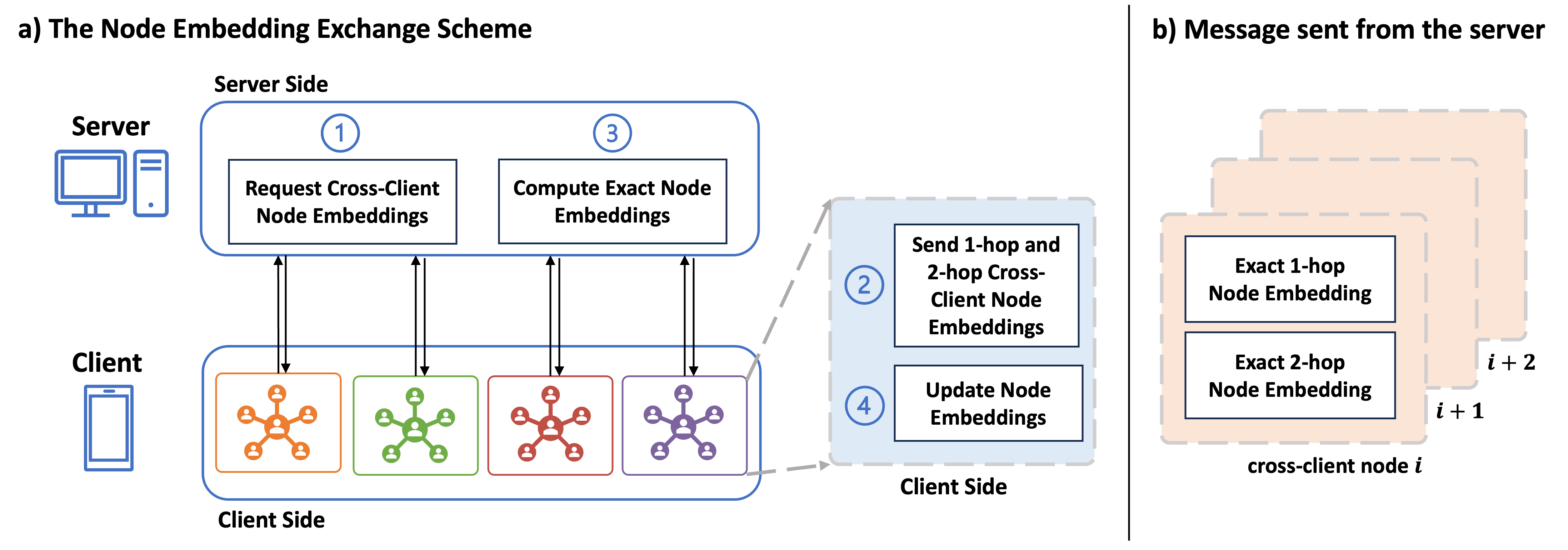}
  \caption{The Node Embedding Exchange Scheme. Figure on the left shows the four stages of the scheme, and the figure on the right shows the construction of a message from the server to a client.}
  \label{fig:ne-exchange}
\end{figure*}

Concretely, the additional embedding needed to obtain the exact embedding for node $i$ is $\sum_{j \in \widehat{\mathcal{N}}_i} H^{(0)}_{t,j}$, which the server computes by summing the 0-hop node embeddings of node $i$'s cross-client neighbours. 
The 2-hop correction is constructed analogously by combining the 1-hop embeddings  $H^{k,(1)}_{t,j}$ submitted by clients holding $j \in \widehat{\mathcal{N}}_i$ with the relevant 0-hop features (full derivation in Appendix \ref{appen:proof}). Because clients only receive aggregated sums returned to them by the server, no client observes another client's raw embeddings, features, or neighbourhood identities, a property we formulated in Appendix \ref{appen:privacy}.


To ensure training stability and enhance performance, we explore several synchronisation schedules for when the embedding exchange occurs. Exchanging only once in the last epoch after the first communication round provides a strong accuracy-privacy trade-off, with results summarized in Appendix \ref{appen:ne_experi}.



\section{Experiments}

\subsection{Experimental Settings}
We evaluate DG-CoLearn on two tasks: link prediction (via dot product of learned embeddings) and node classification, across six datasets spanning social networks (UCI, OTC), citation graphs (DBLP3, DBLP5), online communities (Reddit), and a large scale communication network (AS-733). To assess scalability beyond standard benchmarks, we additionally evaluate on two large-scale temporal graphs from the Temporal Graph Benchmarks \cite{huang2024tgb2}: tgbl-coin (link prediction on cryptocurrency transactions, 22M edges) and tgbn-reddit (node classification on user-subreddit interactions, 27M edges). For link prediction we report MRR and MAP following ROLAND \cite{ROLAND}; for node classification we report F1 following FedDGL \cite{FedDGL}. Baselines span centralized DGL (link prediction) and collaborative DGL (node classification); CoLearnPartition is additionally compared against Label-skew, Louvain, and Metis. Full experimental details are in Appendix \ref{appen:exp_detail}.





\subsection{Experimental Results} \label{sesh:experi_result}
\begin{table*}
\centering
\small
  \caption{DGL Models performances on Link Prediction Tasks}
  \label{tab:LP}
  \begin{tabular}{p{0.15\linewidth}ccc|ccc|ccc}
    \toprule
     & \multicolumn{3}{c|}{\textbf{MRR}} & \multicolumn{3}{c|}{\textbf{MAP}} & \multicolumn{3}{c}{\textbf{Accuracy}} \\
     Models& OTC & UCI & AS-733 & OTC & UCI & AS-733 & OTC & UCI & AS-733 \\
    \midrule
    GCRN-Baseline& 0.2000& 0.0783& O.O.M& 0.9183& 0.6837& O.O.M& 0.8337&0.7931&O.O.M\\
    GCRN-GRU& O.O.M& 0.0800& O.O.M& O.O.M& 0.7343& O.O.M& O.O.M&0.7610&O.O.M\\
    GCRN-LSTM& O.O.M& 0.0830& O.O.M& O.O.M& 0.7891& O.O.M& O.O.M&0.8112&O.O.M\\
    EvolveGCN-H & 0.1241& 0.0995& O.O.M& 0.6692& 0.6715& O.O.M& 0.5048&0.5985&O.O.M\\
    EvolveGCN-O & 0.0850& 0.0094& O.O.M& 0.5032& 0.5230& O.O.M& 0.5056&0.4991&O.O.M\\
    ROLAND & 0.1940& 0.1120& 0.3390& 0.8752 & 0.6967 & 0.9546&0.8660&0.8183&0.8748\\
    Ours & \textbf{0.7830} & \textbf{0.6922}& \textbf{0.8186}& \textbf{0.9353}& \textbf{0.8544}&\textbf{0.9566}&\textbf{0.9877}&\textbf{0.8372}&\textbf{0.9083}\\
    \midrule
    Improvement & 292\% & 518\% & 142\% & 1.85\%& 8.27\%& 0.210\%& 14.1\% & 2.31\%& 3.83\%\\
    \bottomrule
  \end{tabular}
\end{table*}

\textbf{Overall Performance.} 
Table \ref{tab:LP} and \ref{tab:NC} show that DG-CoLearn outperforms all baselines across both tasks and all datasets, achieving up to 518\% MRR and 8.27\% MAP improvement on link prediction, and up to 13.36\% F1 improvement on node classification. The gains over centralised DGL methods (Table \ref{tab:LP}) confirm that collaborative learning, when combined with our incremental learning, captures localized temporal patterns that single-machine approaches miss; the gains over federated baselines (Table \ref{tab:NC}) confirm that explicitly modelling cross-client edges via server-mediated exchange materially improves accuracy over disjoint-graph methods such as FedDGL.


\begin{table}[t]
\centering
\small
  \caption{FL Graph Learning Models F1 performances on Node Classification Tasks}
  \label{tab:NC}
  \begin{tabular}{p{3cm}ccc}
    \toprule
     & \textbf{DBLP5} & \textbf{DBLP3} & \textbf{Reddit} \\
    \midrule
    \multicolumn{4}{c}{Baseline Models} \\
    \midrule
    D-FedGNN & 0.6787& 0.5188&0.2033\\
    FedAvgDyn & 0.6733& 0.5381&0.2353\\
    FedSageDyn & 0.6856& 0.5439&0.2506\\
    FedProtoDyn & 0.7099& 0.5648&0.2487\\
    FedDGL & 0.7243& 0.5836& 0.2733\\
    \midrule
    \multicolumn{4}{c}{Graph Partitioning Algorithm with DG-CoLearn} \\
    \midrule
    Label-skew& 0.6852& 0.6368&0.2620\\
    Louvain& 0.7079& 0.6396&0.2618\\
    Metis& 0.7131 & 0.5966 &0.2709\\
    \midrule
    Ours &\textbf{0.7269}&\textbf{0.6616}&\textbf{0.2834}\\
    \midrule
    Improvement to best & 0.359\% & 13.36\% & 3.70\%\\
    \bottomrule
  \end{tabular}
\end{table}



\begin{figure}[t]
\begin{minipage}{0.49\linewidth}
\includegraphics[width=\linewidth]{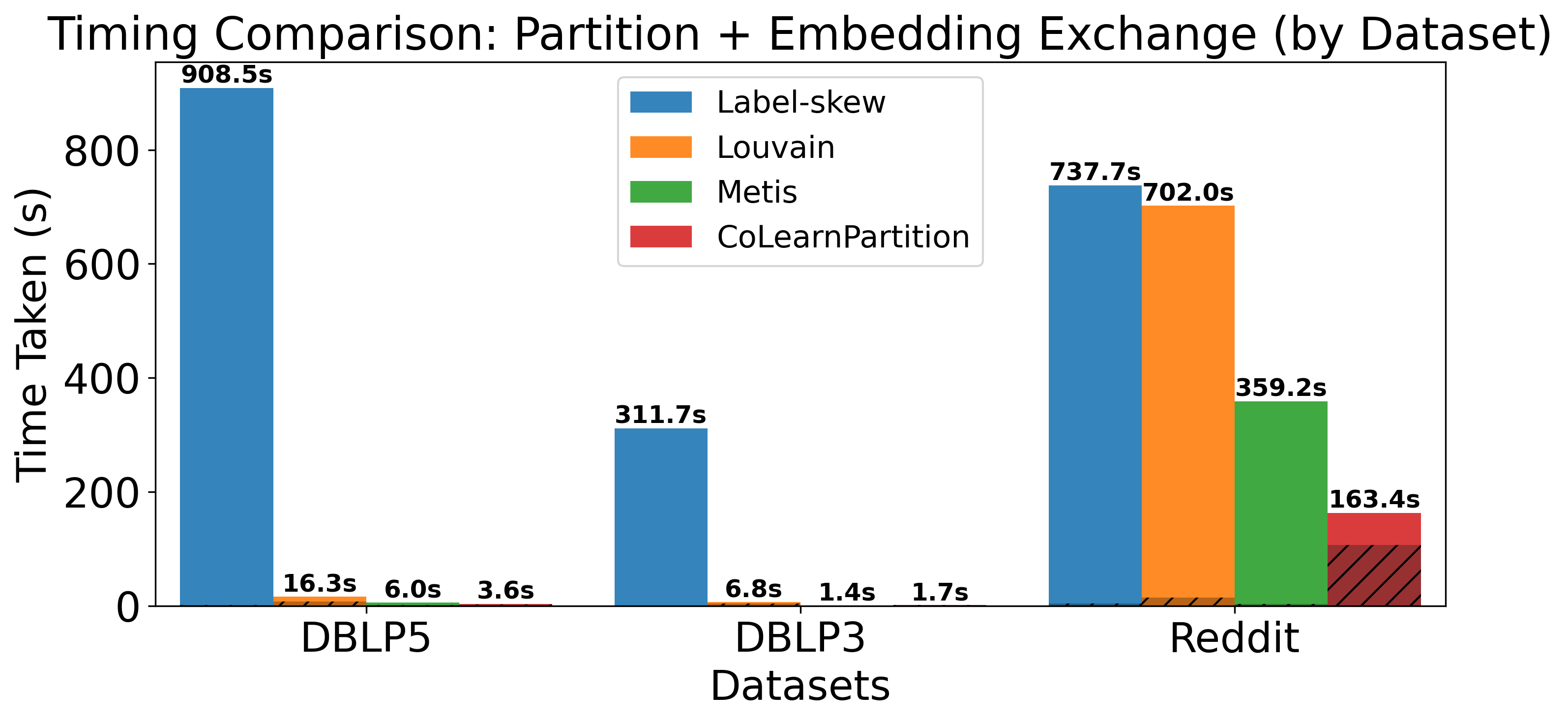}
\caption{Partition + embedding exchange time across partitioning methods.}
\label{fig:time_gpa}
\end{minipage}\hfill
\begin{minipage}{0.49\linewidth}
\includegraphics[width=\linewidth]{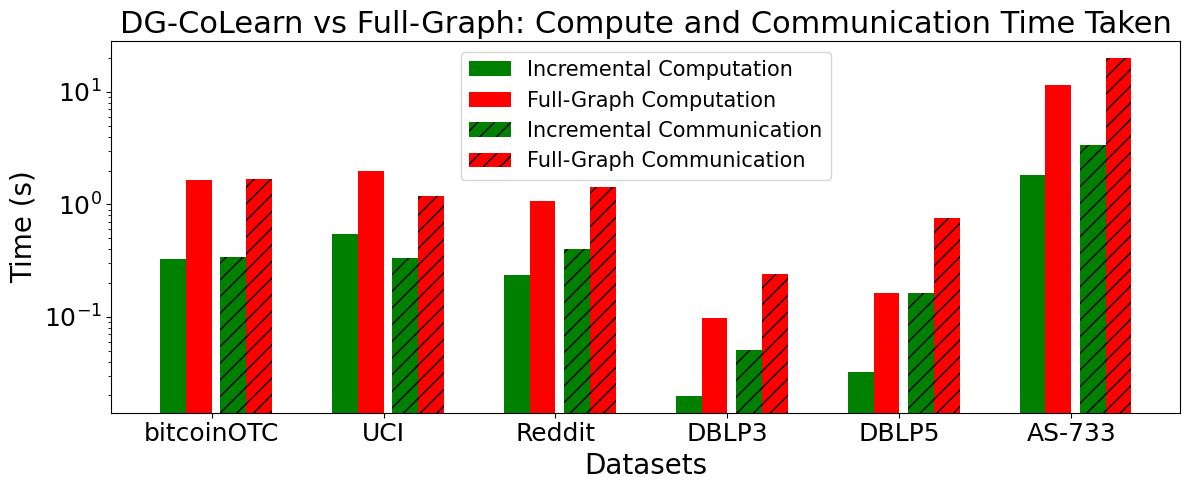}
\caption{Per-snapshot embedding exchange time: incremental vs.\ full-graph.}
\label{fig:time_ne}
\end{minipage}
\end{figure}

\textbf{Evaluating graph partitioning algorithms.} 
The bottom of Table \ref{tab:NC} shows CoLearnPartition yields up to \textbf{4.61\% F1 improvement} to the strongest baseline by producing subgraphs better suited for downstream learning. 
Figure \ref{fig:time_gpa} further shows it significantly reduces node embedding exchange time by up to 2.19× compared to the best algorithm Metis, a direct consequence of its cut-edge minimization objective, which dominates communication cost.


\textbf{Ablation study.} We isolate the contributions of (i) node embedding exchange and (ii) incremental processing. For (i), we evaluate link prediction restricted to cross-client node pairs against two baselines: no exchange and Incremental FedGCN \cite{FedGCN}, adapted to DGL by replacing input features with per-client feature aggregations (Appendix E). Table \ref{tab:sharing_perform} shows DG-CoLearn improves cross-client MAP by up to 11.7\% over the strongest baseline, since FedGCN's feature replacement breaks higher-order message passing while our scheme preserves exact multi-hop embeddings. For (ii), Figure \ref{fig:time_ne} shows our incremental embedding exchange achieves up to 6.3× speedup in computation and 5.9× in communication over full-graph exchange across all six datasets.




\begin{table}
\centering
\small
  \caption{Average Test MAP on Cross-Client Edges across Node Embedding Exchange Schemes}
  \label{tab:sharing_perform}
  \begin{tabular}{cccc}
    \toprule
    \textbf{Exchange Scheme}  &\textbf{OTC}& \textbf{UCI} & \textbf{AS-733}\\
    \midrule
     No exchange & 0.8540 & 0.7637 & 0.8192 \\
     Incremental FedGCN &0.8955 & 0.8493 & 0.9055\\
    DG-CoLearn & \textbf{1.000} & \textbf{0.9276} & \textbf{0.9275} \\
    \midrule
    Improvement to best & 11.7\% & 9.22\% & 2.43\% \\
    \bottomrule
  \end{tabular}
\end{table}

\textbf{Backbone compatibility and large-scale scalability.} We evaluate DG-CoLearn on two large-scale benchmarks: tgbl-coin and tgbn-reddit (Results in Appendix \ref{appen:tgb}). On tgbl-coin , DG-CoLearn adapts CT method DyGFormer and achieves a 33.8× training speedup, 51.6× partitioning speedup, and 27.4× reduction in node embedding exchange overhead, while the \textbf{full-graph retraining baseline fails to complete} under the same setting. On tgbn-reddit, DG-CoLearn delivers a 2.19× snapshot processing speedup and a 76.8\% reduction in cut-edge ratio compared to FedDGL, while also improving F1 by 11.98\%, a gain we attribute to recovering cross-client structural signal that FedDGL's disjoint-subgraph assumption discards. These results show that DG-CoLearn remains effective in regimes where standard distributed baselines do not complete at all.

\section{Conclusion}
We introduced \textbf{DG-CoLearn}, a collaborative learning framework for dynamic graphs built around \textit{incremental snapshot processing}. By systematically restricting computation to updated graph regions, DG-CoLearn applies this principle across partitioning, local training, and cross-client embedding reconstruction. We additionally formalise the client-oblivious privacy regime, practically motivated by systems where a trusted coordinator necessarily maintains global topology and the primary risk lies between clients, and prove that our server-mediated exchange mechanism is provably insufficient for cross-client subgraph reconstruction even under collusion. Extensive experiments on node classification and link prediction tasks demonstrate that DG-CoLearn achieves significant efficiency gains while maintaining or improving predictive performance. These results suggest that the apparent tension between client-to-client privacy and accuracy in collaborative DGL is not fundamental, but an artifact of treating each challenge independently rather than designing for them jointly.

\small
\bibliographystyle{abbrv}
\bibliography{neurips_2026}

@article{UCI,
author = {Panzarasa, Pietro and Opsahl, Tore and Carley, Kathleen M.},
title = {Patterns and dynamics of users' behavior and interaction: Network analysis of an online community},
year = {2009},
volume = {60},
number = {5},
issn = {1532-2882},
journal = {J. Am. Soc. Inf. Sci. Technol.},
pages = {911–932},
}

@inproceedings{OTC,
author = {Kumar, Srijan and Hooi, Bryan and Makhija, Disha and Kumar, Mohit and Faloutsos, Christos and Subrahmanian, V.S.},
title = {REV2: Fraudulent User Prediction in Rating Platforms},
year = {2018},
booktitle = {Proceedings of the Eleventh ACM International Conference on Web Search and Data Mining},
pages = {333–341},
}

@inproceedings{Social_Network,
author = {Tang, Jie and Zhang, Jing and Yao, Limin and Li, Juanzi and Zhang, Li and Su, Zhong},
title = {ArnetMiner: extraction and mining of academic social networks},
year = {2008},
booktitle = {Proceedings of the 14th ACM SIGKDD International Conference on Knowledge Discovery and Data Mining},
pages = {990–998},
series = {KDD '08}
}

@inproceedings{Reddit,
author = {Kumar, Srijan and Hamilton, William L. and Leskovec, Jure and Jurafsky, Dan},
title = {Community Interaction and Conflict on the Web},
year = {2018},
pages = {933–943},
series = {WWW '18}
}

@inproceedings{as-733,
author = {Leskovec, Jure and Kleinberg, Jon and Faloutsos, Christos},
title = {Graphs over time: densification laws, shrinking diameters and possible explanations},
year = {2005},
isbn = {159593135X},
publisher = {Association for Computing Machinery},
address = {New York, NY, USA},
url = {https://doi.org/10.1145/1081870.1081893},
doi = {10.1145/1081870.1081893},
booktitle = {Proceedings of the Eleventh ACM SIGKDD International Conference on Knowledge Discovery in Data Mining},
pages = {177–187},
numpages = {11},
location = {Chicago, Illinois, USA},
series = {KDD '05}
}

@article{Aligraph,
author = {Zhu, Rong and Zhao, Kun and Yang, Hongxia and Lin, Wei and Zhou, Chang and Ai, Baole and Li, Yong and Zhou, Jingren},
title = {AliGraph: a comprehensive graph neural network platform},
year = {2019},
issue_date = {August 2019},
volume = {12},
number = {12},
issn = {2150-8097},
journal = {Proc. VLDB Endow.},
pages = {2094–2105},
}

@inproceedings{ESDG,
author = {Chakaravarthy, Venkatesan T. and Pandian, Shivmaran S. and Raje, Saurabh and Sabharwal, Yogish and Suzumura, Toyotaro and Ubaru, Shashanka},
title = {Efficient scaling of dynamic graph neural networks},
year = {2021},
booktitle = {Proceedings of the International Conference for High Performance Computing, Networking, Storage and Analysis},
articleno = {77},
series = {SC '21}
}

@inproceedings{BLAD,
author = {Fu, Kaihua and Chen, Quan and Yang, Yuzhuo and Shi, Jiuchen and Li, Chao and Guo, Minyi},
title = {BLAD: Adaptive Load Balanced Scheduling and Operator Overlap Pipeline For Accelerating The Dynamic GNN Training},
year = {2023},
booktitle = {Proceedings of the International Conference for High Performance Computing, Networking, Storage and Analysis},
articleno = {37},
series = {SC '23}
}

@inproceedings{FedSage,
title={Subgraph Federated Learning with Missing Neighbor Generation},
author={Ke ZHANG and Carl Yang and Xiaoxiao Li and Lichao Sun and Siu Ming Yiu},
booktitle={Advances in Neural Information Processing Systems},
editor={A. Beygelzimer and Y. Dauphin and P. Liang and J. Wortman Vaughan},
year={2021},
}

@article{FedEgo,
author = {Zhang, Taolin and Mai, Chengyuan and Chang, Yaomin and Chen, Chuan and Shu, Lin and Zheng, Zibin},
title = {FedEgo: Privacy-preserving Personalized Federated Graph Learning with Ego-graphs},
year = {2023},
issue_date = {February 2024},
volume = {18},
number = {2},
issn = {1556-4681},
journal = {ACM Trans. Knowl. Discov. Data},
month = nov,
articleno = {40},
}

@article{FedGCN,
  title={FedGCN: convergence-communication tradeoffs in federated training of graph convolutional networks},
  author={Yao, Yuhang and Jin, Weizhao and Ravi, Srivatsan and Joe-Wong, Carlee},
  journal={Advances in neural information processing systems},
  volume={36},
  pages={79748--79760},
  year={2023}
}

@InProceedings{FedDGL,
  title = 	 {{FedDGL}: {F}ederated Dynamic Graph Learning for Temporal Evolution and Data Heterogeneity},
  author =       {Xie, Zaipeng and Likun, Li and Chen, Xiangbin and Yu, Hao and Huang, Qian},
  booktitle = 	 {Proceedings of the 16th Asian Conference on Machine Learning},
  pages = 	 {463--478},
  year = 	 {2025},
  volume = 	 {260},
  month = 	 {05--08 Dec},
}

@inproceedings{ROLAND,
  title={ROLAND: graph learning framework for dynamic graphs},
  author={You, Jiaxuan and Du, Tianyu and Leskovec, Jure},
  booktitle={Proceedings of the 28th ACM SIGKDD Conference on Knowledge Discovery and Data Mining},
  pages={2358--2366},
  year={2022}
}

@inproceedings{EvolveGCN,
  title={Evolvegcn: Evolving graph convolutional networks for dynamic graphs},
  author={Pareja, Aldo and Domeniconi, Giacomo and Chen, Jie and Ma, Tengfei and Suzumura, Toyotaro and Kanezashi, Hiroki and Kaler, Tim and Schardl, Tao and Leiserson, Charles},
  booktitle={Proceedings of the AAAI conference on artificial intelligence},
  volume={34},
  number={04},
  pages={5363--5370},
  year={2020}
}

@inproceedings{DMGCN,
  title={Dynamic and multi-channel graph convolutional networks for aspect-based sentiment analysis},
  author={Pang, Shiguan and Xue, Yun and Yan, Zehao and Huang, Weihao and Feng, Jinhui},
  booktitle={Findings of the association for computational linguistics: ACL-IJCNLP 2021},
  pages={2627--2636},
  year={2021}
}

@article{DyGFormer,
  title={Towards better dynamic graph learning: New architecture and unified library},
  author={Yu, Le and Sun, Leilei and Du, Bowen and Lv, Weifeng},
  journal={Advances in Neural Information Processing Systems},
  volume={36},
  pages={67686--67700},
  year={2023}
}

@inproceedings{DyGRAIN,
  title={DyGRAIN: An Incremental Learning Framework for Dynamic Graphs.},
  author={Kim, Seoyoon and Yun, Seongjun and Kang, Jaewoo},
  booktitle={IJCAI},
  pages={3157--3163},
  year={2022}
}

@article{fedgraph,
  title={Fedgraph: Federated graph learning with intelligent sampling},
  author={Chen, Fahao and Li, Peng and Miyazaki, Toshiaki and Wu, Celimuge},
  journal={IEEE Transactions on Parallel and Distributed Systems},
  volume={33},
  number={8},
  pages={1775--1786},
  year={2021},
  publisher={IEEE}
}

@article{Louvain,
  title={Fast unfolding of communities in large networks},
  author={Blondel, Vincent D and Guillaume, Jean-Loup and Lambiotte, Renaud and Lefebvre, Etienne},
  journal={Journal of statistical mechanics: theory and experiment},
  volume={2008},
  number={10},
  pages={P10008},
  year={2008},
  publisher={IOP Publishing}
}

@inproceedings{forgetting,
  title={Continual learning with tiny episodic memories},
  author={Chaudhry, Arslan and Rohrbach, Marcus and Elhoseiny, Mohamed and Ajanthan, Thalaiyasingam and Dokania, P and Torr, P and Ranzato, M},
  booktitle={Workshop on Multi-Task and Lifelong Reinforcement Learning},
  year={2019}
}

@inproceedings{ER-GNN,
  title={Overcoming catastrophic forgetting in graph neural networks with experience replay},
  author={Zhou, Fan and Cao, Chengtai},
  booktitle={Proceedings of the AAAI conference on artificial intelligence},
  volume={35},
  number={5},
  pages={4714--4722},
  year={2021}
}

@inproceedings{TWP,
  title={Overcoming catastrophic forgetting in graph neural networks},
  author={Liu, Huihui and Yang, Yiding and Wang, Xinchao},
  booktitle={Proceedings of the AAAI conference on artificial intelligence},
  volume={35},
  number={10},
  pages={8653--8661},
  year={2021}
}

@inproceedings{FederatedScope,
author = {Wang, Zhen and Kuang, Weirui and Xie, Yuexiang and Yao, Liuyi and Li, Yaliang and Ding, Bolin and Zhou, Jingren},
title = {FederatedScope-GNN: Towards a Unified, Comprehensive and Efficient Package for Federated Graph Learning},
year = {2022},
isbn = {9781450393850},
publisher = {Association for Computing Machinery},
address = {New York, NY, USA},
url = {https://doi.org/10.1145/3534678.3539112},
doi = {10.1145/3534678.3539112},
booktitle = {Proceedings of the 28th ACM SIGKDD Conference on Knowledge Discovery and Data Mining},
pages = {4110–4120},
numpages = {11},
location = {Washington DC, USA},
series = {KDD '22}
}

@inproceedings{llcg,
    author = {Ramezani, Morteza and Cong, Weilin and Mahdavi, Mehrdad and Kandemir, Mahmut T and Sivasubramaniam, Anand},
    title = {Learn locally, correct globally: A distributed algorithm for training graph neural networks},
    booktitle = {10th International Conference on Learning Representations, ICLR},
    year = {2022}
}

@article{temporalpart,
  title={Practical and high-quality partitioning algorithm for large-scale and time-evolving graphs},
  author={Luo, Xiangyu and Luo, Yingxiao and Xin, Gang and Gui, Xiaolin and Wang, Jia and Guo, Cheng},
  journal={Knowledge-Based Systems},
  volume={227},
  pages={107211},
  year={2021},
  publisher={Elsevier}
}

@article{applications,
  title={Recent Advances in Efficient Dynamic Graph Processing},
  author={Chen, Zi and Liang, Keke and Yuan, Long and Zhang, Wenjie and Yang, Zhengyi},
  journal={Applied Sciences},
  volume={15},
  number={11},
  pages={6003},
  year={2025},
  publisher={MDPI}
}

@InProceedings{ngil,
  title = 	 {Towards Robust Graph Incremental Learning on Evolving Graphs},
  author =       {Su, Junwei and Zou, Difan and Zhang, Zijun and Wu, Chuan},
  booktitle = 	 {Proceedings of the 40th International Conference on Machine Learning},
  pages = 	 {32728--32748},
  year = 	 {2023},
  editor = 	 {Krause, Andreas and Brunskill, Emma and Cho, Kyunghyun and Engelhardt, Barbara and Sabato, Sivan and Scarlett, Jonathan},
  volume = 	 {202},
  series = 	 {Proceedings of Machine Learning Research},
  month = 	 {23--29 Jul},
  publisher =    {PMLR},
  url = 	 {https://proceedings.mlr.press/v202/su23a.html},
}

@article{FedGNNLDP,
  title={FedGNNLDP: Federated Graph Neural Network with Locally Differential Privacy},
  author={Liu, Yaqi and Zhang, Yue and He, Pinzhen and Fang, Shuzhen},
  journal={Computers \& Security},
  pages={104757},
  year={2025},
  publisher={Elsevier}
}

@inproceedings{MIA1,
  title={Membership inference attack on graph neural networks},
  author={Olatunji, Iyiola E and Nejdl, Wolfgang and Khosla, Megha},
  booktitle={2021 Third IEEE International Conference on Trust, Privacy and Security in Intelligent Systems and Applications (TPS-ISA)},
  pages={11--20},
  year={2021},
  organization={IEEE}
}

@article{MIA2,
  title={Graph-level label-only membership inference attack against graph neural networks},
  author={Dai, Jiazhu and Lu, Yubing},
  journal={Applied Sciences},
  volume={15},
  number={9},
  pages={5086},
  year={2025},
  publisher={MDPI}
}

@article{MIA3,
  title={Subgraph structure membership inference attacks against graph neural networks},
  author={Wang, Xiuling and Wang, Wendy Hui},
  journal={Proceedings on Privacy Enhancing Technologies},
  year={2024}
}

@article{MIA4,
  title={Attention-based membership inference attacks on graph neural network through topological features},
  author={Guan, Faqian and Zhu, Tianqing and Tong, Hanjin and Zhou, Wanlei},
  journal={IEEE Transactions on Dependable and Secure Computing},
  year={2025},
  publisher={IEEE}
}

@inproceedings{GIA1,
  title={$\{$SoK$\}$: Gradient Inversion Attacks in Federated Learning},
  author={Carletti, Vincenzo and Foggia, Pasquale and Mazzocca, Carlo and Parrella, Giuseppe and Vento, Mario},
  booktitle={34th USENIX Security Symposium (USENIX Security 25)},
  pages={6439--6459},
  year={2025}
}

@article{GIA2,
  title={Deep leakage from gradients},
  author={Zhu, Ligeng and Liu, Zhijian and Han, Song},
  journal={Advances in neural information processing systems},
  volume={32},
  year={2019}
}

@inproceedings{GIA3,
  title={Gradients Stand-in for Defending Deep Leakage in Federated Learning},
  author={Hu, Yi and Ren, Hanchi and Hu, Chen and Li, Yiming and Deng, Jingjing and Xie, Xianghua},
  booktitle={2024 International Conference on Computing in Natural Sciences, Biomedicine and Engineering (COMCONF)},
  pages={53--64},
  year={2024},
  organization={IEEE}
}

@article{GUN1,
  title={A survey on machine unlearning: Techniques and new emerged privacy risks},
  author={Liu, Hengzhu and Xiong, Ping and Zhu, Tianqing and Yu, Philip S},
  journal={Journal of Information Security and Applications},
  volume={90},
  pages={104010},
  year={2025},
  publisher={Elsevier}
}

@inproceedings{GUN2,
  title={Machine Unlearning Fails to Remove Data Poisoning Attacks},
  author={Martin Pawelczyk and Jimmy Z. Di and Yiwei Lu and Gautam Kamath and Ayush Sekhari and Seth Neel},
  booktitle={The Thirteenth International Conference on Learning Representations},
  year={2026}
}

@inproceedings{TGAT,
  title={TGAT: temporal graph attention network for blockchain phishing scams detection},
  author={Dai, Chaofan and Tang, Qideng and Ding, Huahua},
  booktitle={2024 International Conference on Computer, Information and Telecommunication Systems (CITS)},
  pages={1--7},
  year={2024},
  organization={IEEE}
}

@article{TGN,
  title={Temporal graph networks for deep learning on dynamic graphs},
  author={Rossi, Emanuele and Chamberlain, Ben and Frasca, Fabrizio and Eynard, Davide and Monti, Federico and Bronstein, Michael},
  journal={arXiv preprint arXiv:2006.10637},
  year={2020}
}

@article{DPDGL,
  title={Personalized graph federated learning with differential privacy},
  author={Gauthier, Francois and Gogineni, Vinay Chakravarthi and Werner, Stefan and Huang, Yih-Fang and Kuh, Anthony},
  journal={IEEE Transactions on Signal and Information Processing over Networks},
  volume={9},
  pages={736--749},
  year={2023},
  publisher={IEEE}
}

@article{huang2024tgb2,
  title={TGB 2.0: A Benchmark for Learning on Temporal Knowledge Graphs and Heterogeneous Graphs},
  author={Gastinger, Julia and Huang, Shenyang and Galkin, Mikhail and Loghmani, Erfan and Parviz, Ali and Poursafaei, Farimah and Danovitch, Jacob and Rossi, Emanuele and Koutis, Ioannis and Stuckenschmidt, Heiner and      Rabbany, Reihaneh and Rabusseau, Guillaume},
  journal={Advances in Neural Information Processing Systems},
  year={2024}
}







\appendix

\section{Additional Details of CoLearnPartition} \label{appen:gpa}

\subsection{Overall Flow of CoLearnPartition}
Figure \ref{fig:overallgpa} shows the complete procedure of CoLearnPartition, where the number of partitions depends on the size of the incoming graph. Therefore, as the graph size grows, the number of partitions required will also increase, and the previous partitioning will no longer be \textit{valid}. However, if the number of partitions in the previous snapshot is the same as the current snapshot, we can reuse the previous partition so that the existing graph elements do not need to be reprocessed.
\begin{figure*}[t]
  \centering
  \includegraphics[width=1.0\textwidth]{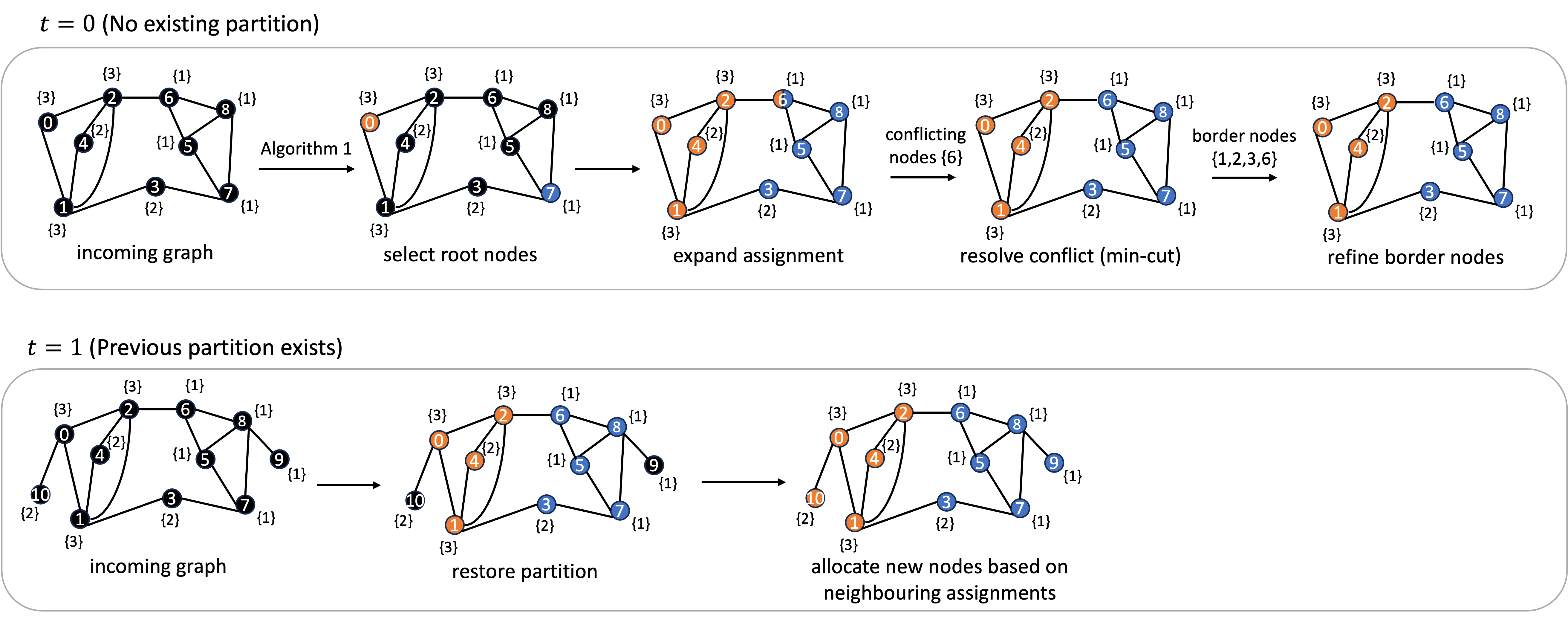}
  \caption{Overall Procedure of CoLearnPartition splitting a graph into 2 clients. Top part illustrates the case when there is no previous partition. Bottom part illustrates the incremental partitioning process. Each node is annotated with its node label (shown in curly brackets)}
  \label{fig:overallgpa}
\end{figure*}

\subsection{Formulae of Scoring Function}
The primary node allocation scoring function $P(v,S_i)$ for resolving conflicts via minimum cut edges is defined as:
\begin{equation*}
P(v, S_i) = |\mathcal{N}(v) \cap V_i|
\end{equation*}

where:
\begin{itemize}
\item $v$ is the node being allocated
\item $S_i = (V_i, E_i)$ is candidate subgraph $i$
\item $\mathcal{N}(v)$ is the neighbour set of $v$
\item $V_i$ is the node set of subgraph $S_i$
\end{itemize}

The allocation decision selects the subgraph $S^*$ that maximizes this score:

\begin{equation}
S^* = \underset{S_i \in \mathcal{G}}{\arg\max} P(v, S_i)
\end{equation}

where $\mathcal{G}$ is the set of candidate subgraphs.

The refinement scoring function evaluates potential moves of a node $v$ to neighbouring subgraphs based on a weighted combination of balance and label diversity metrics. For each candidate subgraph $S_j \in \mathcal{G}$, the score is computed as:

\begin{equation*}
R(v, S_j) = \alpha \cdot \text{balance}(v, S_j) + (1-\alpha) \cdot \text{label\_div}(v, S_j)
\end{equation*}

where $\alpha = 0.5$ balances the two objectives. The balance score $\text{balance}(v, S_j)$ considers:
\begin{itemize}
    \item Current subgraph sizes and overall graph balance
    \item Synthetic edge connections (if present)
    \item Isolation status of nodes
\end{itemize}

The label diversity metric is computed as:
\begin{equation}
\text{label\_div}(v, S_j) = 1 - \frac{|\{u \in S_j : L(u) = L(v)\}|}{|S_j| + \epsilon}
\end{equation}

where $L(u)$ denotes the label of node $u$ and $\epsilon$ prevents division by zero. The refinement moves $v$ to the subgraph $R^*$ that maximizes:

\begin{equation}
R^* = \underset{S_j \in \mathcal{G}}{\arg\max} [R(v, S_j) - R(v, S^*)] > \tau
\end{equation}

where $\tau$ is a minimum improvement threshold. The algorithm only moves $v$ if the improvement exceeds both $\tau$ and any previously seen improvement during the evaluation.

\subsection{Time Complexity of CoLearnPartition}
CoLearnPartition employs a multi-stage strategy including graph preprocessing to partition dynamic graphs while minimising cross-client edges and preserving edge balance. We analyse its complexity step-wise:

\textbf{Phase 1: Preprocessing.} Using breadth-first search (BFS) to visit connected components requires $O(|V|+|E|)$ time for a graph $G=(V,E)$. Connecting $m$ connected components costs $O(mlogm)$. Here, $O(|V|+|E|)$ dominates Phase 1. 

\textbf{Phase 2: Primary Stage of Conflict Resolving.} Computing $k$ furthest nodes via BFS requires $O(k(|V|+|E|))$. Resolving the number of cut edges involves identifying neighbours of each conflicting node and counting subgraph interaction. Consider a cross-client node $v$, we iterate through all $d_v$ neighbours, where $d_v$ is the degree of node $v$. For each neighbour, we use constant time to lookup and record the partition it is in. Afterwards, we find the partition with the highest count using $O(k)$ time where $k$ is the number of subgraphs. Therefore, the case complexity for one conflicting node is $O(k + d_v)$. For $c$ conflicting nodes, the time complexity becomes $O(c \cdot (k+d_v))$. The cost is dominated by neighbour iteration $O(d_v)$, in sparse graphs such as social networks, we have $d_v \ll |V|$.

\textbf{Phase 3: Refinement Stage for conflicting and border nodes.} This stage evaluates label distribution and edge balance to reassign border nodes. In terms of node label scoring, we iterate through each subgraph's nodes (i.e. $O(|S_i|)$) and compare the frequencies in constant time, yielding the worst time complexity of $O(|V|)$. For balance scoring, checking the size of each subgraph also takes $O(|S_i|)$. Therefore, Phase 3 yields a time complexity of $O(|V|)$.

\textbf{Overall Complexity.} Combining the three phases, we have:
\begin{equation*}
    \mathcal{T}_{\text{total}} = \underbrace{O(|V| + |E|)}_{\text{Preprocessing}} + \underbrace{O\left(k(|V| + |E|) + d_v\right)}_{\text{BFS + Primary Resolve}} + \underbrace{O(|V|)}_{\text{Refinement}}
\end{equation*}

Overall speaking, the full procedure has a time complexity of $O(|V|+|E|)$. 

\subsection{Pseudocode of CoPartition}
\begin{algorithm}
\caption{CoLearnPartition (without existing valid partition)}
\label{alg:colearnpartition}
\begin{algorithmic}[1]
\Require Snapshot $G_t = (V_t, E_t)$

\State Select $M$ seed nodes that are maximally distant
\State Grow provisional partitions via BFS from each seed
\State Identify conflicting nodes reached by multiple seeds

\State \textbf{/* Primary stage: hard constraint */}
\State Assign conflicting nodes to minimize cross-client edges

\State Identify border nodes with neighbours in other partitions

\State \textbf{/* Secondary stage: soft refinement */}
\State Refine border-node assignments to balance partition edge volume and node label distribution while limiting new edge cuts

\State \textbf{Return:} Updated partitions $\{V_t^{(1)}, \dots, V_t^{(M)}\}$
\end{algorithmic}
\end{algorithm}

\section{Experiments on various Partitioning Stage Configurations} \label{appen:gpa_experiments}
In terms of primary and secondary stage of conflict resolving, there are three combinations to be explored:
\begin{enumerate}
    \item \textbf{Primary:} Minimize Cut Edges; \textbf{Secondary:} Edge Size Balance and Node Label Diversity [\texttt{MinCut\_First}]
    \item \textbf{Primary:} Edge Size Balance; \textbf{Secondary:} Minimize Cut Edges and Node Label Diversity [\texttt{Balance\_First}]
    \item \textbf{Primary:} Node Label Diversity; \textbf{Secondary:} Minimize Cut Edges and Edge Size Balance [\texttt{Label\_First}]
\end{enumerate}
To formally evaluate the impact of each configuration, we used three numerical metrics to measure each objective:

To measure the impact of \textit{minimizing cut edges}, we use the \textbf{cut edge ratio}, which is the proportion of cut edges in the graph:
\begin{equation*}
    Cut\_Edge\_Ratio = \frac{\text{Number of Cut Edges}}{\text{Total Number of Edges in Graph}}
\end{equation*}

To measure the direct impact of cut edges to the graph learning pipeline, we also recorded the time of node embedding exchange.

To quantify \textit{how balanced the subgraphs} are, we measure the \textbf{coefficient of variation (CoV)}, which captures the relative variation in the subgraphs' edge counts:
\begin{equation*}
    CoV = \frac{\text{Standard Deviation } \sigma}{\text{Mean } \mu}
\end{equation*}
CoV values are usually bounded between 0 and 1, a CoV close to 0 means more consistency in the subgraph sizes (i.e. balanced subgraph size), while a high CoV means the subgraph sizes are more varied. In rare cases, CoV values can exceed 1, indicating severe imbalance. 

For \textit{node label diversity}, we measure with \textbf{shannon entropy} $H$, which is usually used for categorical distributions. It effectively measures how evenly distributed the node labels are. Given $n$ label classes $p_1, p_2, ..., p_n$, shannon entropy is computed as:
\begin{equation*}
    H = - \sum_{i=1}^n p_i \cdot log_2(p_i)
\end{equation*}
While the value is bounded between 0 and $log_2(n)$, a high entropy close to $log_2(n)$ implies a more uniform label distribution, while a lower entropy close to 0 indicates a skewed distribution. 

The direct impact of graph balance on graph learning can be reflected by the variance of clients' local accuracies. 

\subsection{Cut Edge Ratio} 
\begin{figure*}[t]
  \centering
  \includegraphics[width=1.0\textwidth]{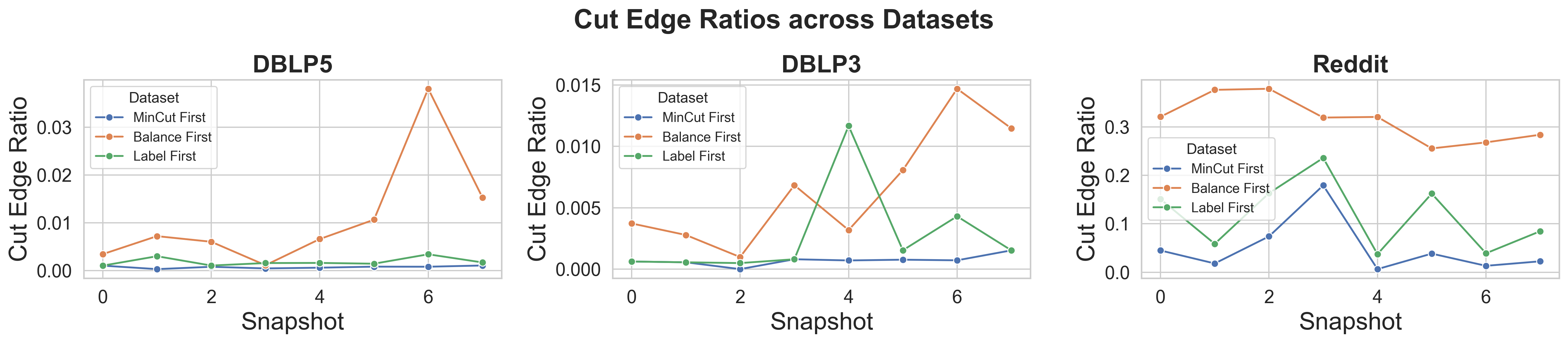}
  \caption{Cut Edge Ratio across configurations for each dataset}
  \label{fig:cutedgeratio}
\end{figure*}
Figure \ref{fig:cutedgeratio} demonstrates that \texttt{MinCut\_First} consistently achieves the lowest cut edge ratio in all three datasets, validating its effectiveness in minimising edge cuts. In contrast, \texttt{Label\_First} exhibits a 2.34× higher cut edge ratio, while \texttt{Balance\_First} performs significantly worse, with ratios up to 7.75× higher than \texttt{MinCut\_First}. This difference highlights the inherent trade-off between balanced partitions (\texttt{Balance\_First}) and minimal edge cuts (\texttt{MinCut\_First}), with \texttt{Label\_First} offering an intermediate compromise.

\subsection{Time Consumption on Node Embedding Exchange}
\begin{figure*}[t]
  \centering
  \includegraphics[width=1.0\textwidth]{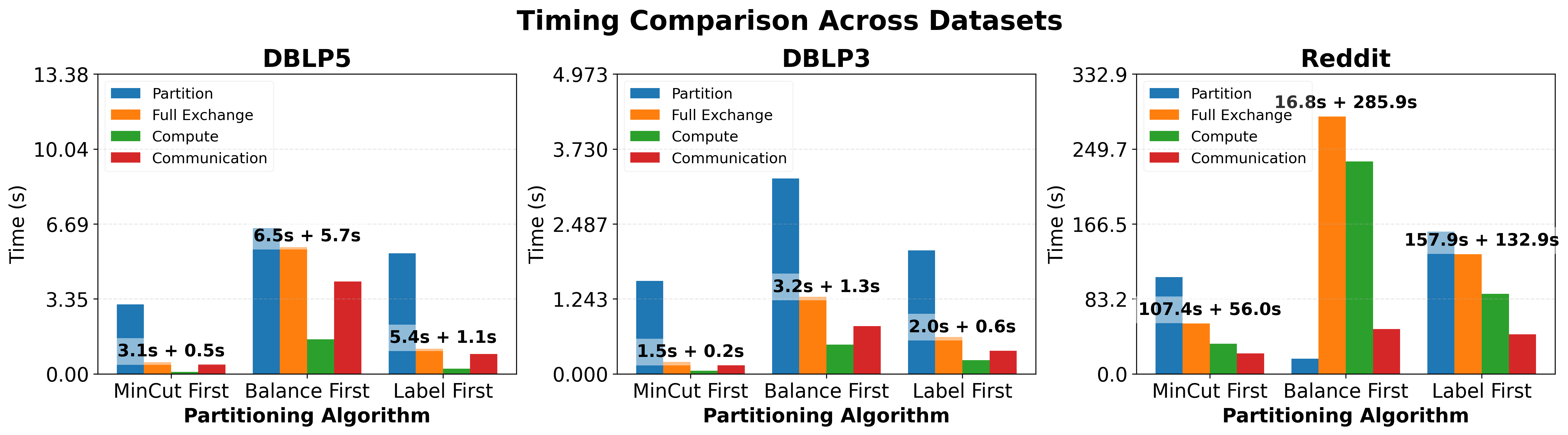}
  \caption{Time taken for Graph Partition and Node Embedding Exchange across configurations for each dataset (Each bolded label value indicates the partition time, (i.e., blue bar) + total time to perform node embedding exchange, (i.e., orange bar))}
  \label{fig:gpa_strat_time}
\end{figure*}
To further assess the impact of cut edges on the graph learning pipeline, we analysed the node embedding exchange time in Figure \ref{fig:gpa_strat_time}. The total exchange time (orange bar) comprises both communication time (red bar) and computation time (green bar). Together with the graph partition time, their exact values are annotated above each bar. 

The results reveal that \texttt{Balance\_First} incurs significantly longer embedding exchange times compared \texttt{MinCut\_First} and \texttt{Label\_First}, directly correlating with its higher cut edges count. Notably, \texttt{MinCut\_First} with the lowest cut, achieves the fastest exchange times. Using \texttt{MinCut\_First} as the baseline, \texttt{Balance\_First} requires up to 5.11× more time, while \texttt{Label\_First} takes up to 3.0× longer, demonstrating the computational cost of excessive cross-client edges. Given node embedding exchange is a critical phase in our framework, optimizing this procedure through efficient configuration is essential. Therefore, we exclude \texttt{Balance\_First} as a viable candidate due to its prohibitive overhead.

\subsection{Coefficient of Variations (CoV)}
\begin{figure*}[t]
  \centering
  \includegraphics[width=1.0\textwidth]{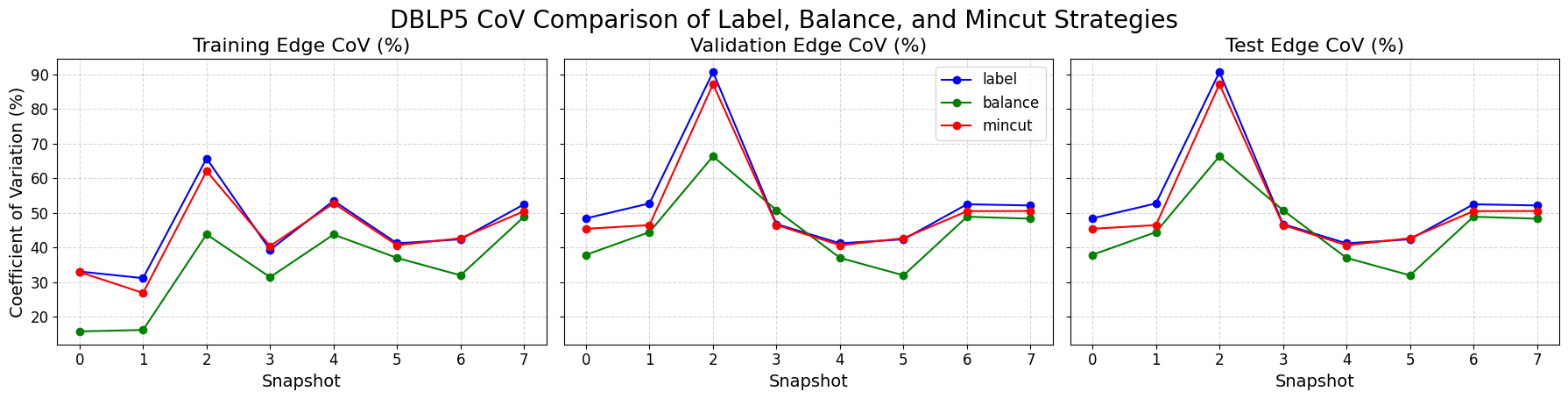}
  \caption{Coefficient of Variation of training, validation and test edges across configurations (DBLP5)}
  \label{fig:dblp5_cov}
\end{figure*}
\begin{figure*}[t]
  \centering
  \includegraphics[width=1.0\textwidth]{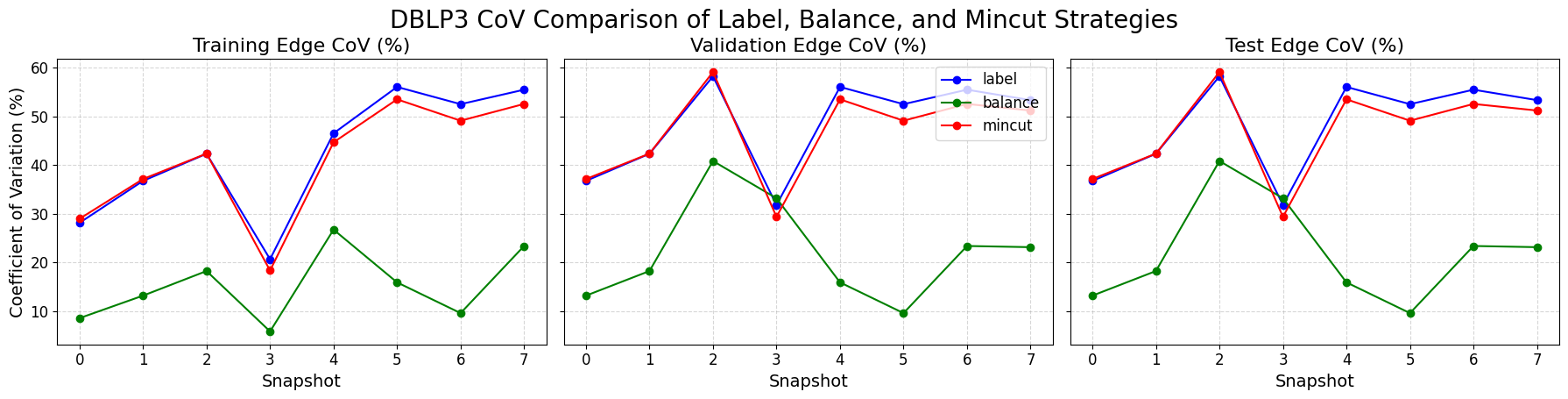}
  \caption{Coefficient of Variation of training, validation and test edges across configurations (DBLP3)}
  \label{fig:dblp3_cov}
\end{figure*}
\begin{figure*}[t]
  \centering
  \includegraphics[width=1.0\textwidth]{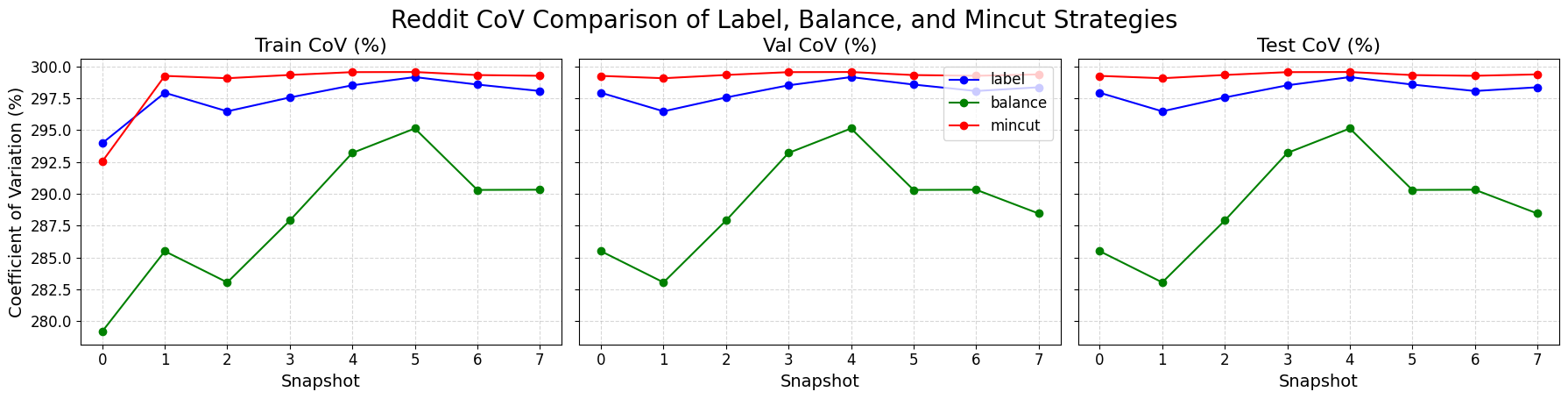}
  \caption{Coefficient of Variation of training, validation and test edges across configurations (Reddit)}
  \label{fig:reddit_cov}
\end{figure*}
We next analyse the impact of partition balancing for each dataset. Figures \ref{fig:dblp5_cov},\ref{fig:dblp3_cov}, and \ref{fig:reddit_cov} present the coefficient of variation for training, validation, and test edge distributions for three different datasets respectively, where a perfect balance corresponds to CoV = 0. Surprisingly, \texttt{MinCut\_First} and \texttt{Label\_First} occasionally achieve lower CoV values than \texttt{Balance\_First} in DBLP5 and DBLP3, suggesting that explicit balancing does not always yield the most uniform splits. Secondly, all configurations exhibit substantially higher CoV values in Reddit (i.e., higher than 1), indicating severe partition imbalance. We attribute these to fundamental graph structural properties: real-world graphs exhibit heavy-tailed degree distributions. This forces compromises, for instance, balancing partitions would require excessive edge cuts, while avoiding cuts leads to concentration in single partitions. Our results suggest this structural bias dominates even explicit balancing efforts.

\subsection{Shannon Entropy}
\begin{figure*}[t]
  \centering
  \includegraphics[width=1.0\textwidth]{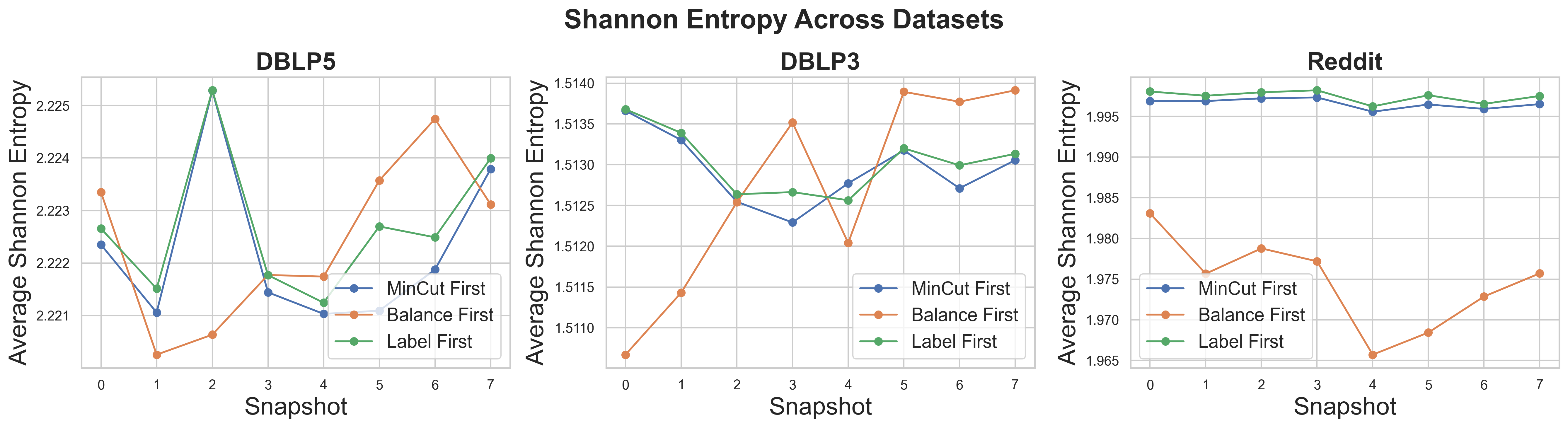}
  \caption{Shannon Entropy across configurations for each dataset}
  \label{fig:shannon}
\end{figure*}
Shannon Entropy exhibits more uniform distribution if its value is close to $log_2(n)$, where $n$ is the number of classes of the dataset. Table \ref{tab:best_shannon} shows the best value for each dataset.
\begin{table}
    \centering
    \begin{tabular}{c|c|c}
         \textbf{Dataset}&  \textbf{Number of classes $n$} &  \textbf{Best Value ($log_2(n)$)}\\
         \midrule
         DBLP5&  5&  2.32\\
         DBLP3&  3&  1.58\\
         Reddit&  4&  2.00\\
    \end{tabular}
    \caption{Best Shannon Entropy value for each dataset}
    \label{tab:best_shannon}
\end{table}
Figure \ref{fig:shannon} reveals two key findings about label diversity. Firstly \texttt{MinCut\_First} and \texttt{Label\_First} achieve nearly identical entropy values, consistently approaching near-optimal entropy for each dataset. Their convergence suggests that \texttt{MinCut\_First} indirectly preserves label diversity as effectively as primary label optimization in \texttt{Label\_First}. In contrast, \texttt{Balance\_First} exhibits more fluctuated and reduced entropy. This reflects inherent conflicts between its primary partition balancing objective and label distribution preservation, suggesting force graph splitting to achieve size equality disrupts natural communities sharing the same label. 

\subsection{Local Client Accuracies}
\begin{figure*}[t]
  \centering
  \includegraphics[width=1.0\textwidth]{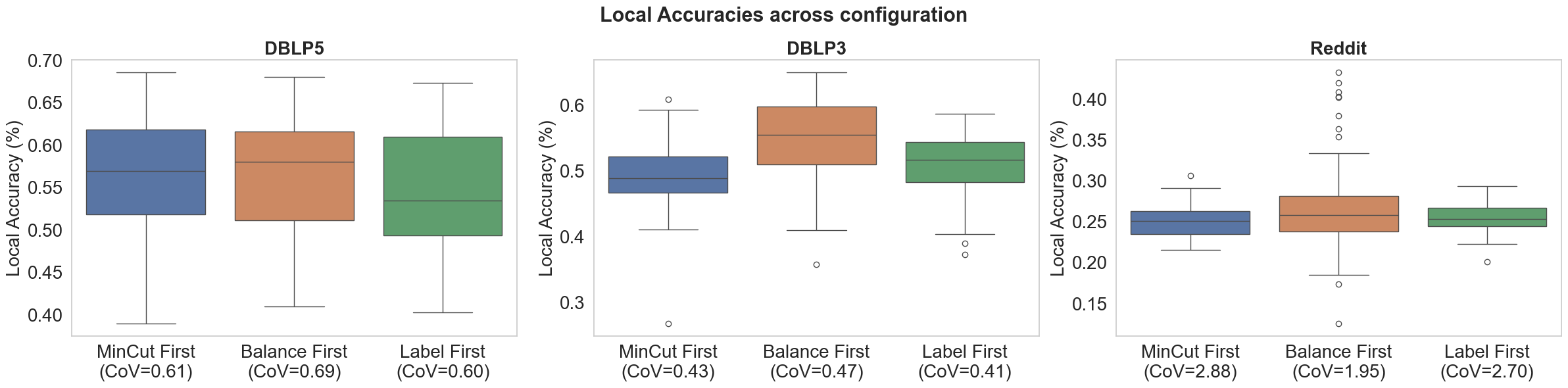}
  \caption{Local Accuracies across configurations for each dataset (The average of the CoV values across snapshots are computed and listed for reference.)}
  \label{fig:cov_vs_acc}
\end{figure*}
We evaluate the impact of strategies on local client performances through boxplots analysis in Figure \ref{fig:cov_vs_acc}. The horizontal line within the box marks the median accuracy (50th percentile), providing the \textbf{central tendency of clients}. The box boundaries indicate the 25th (Q1) and 75th (Q3) percentiles, showing where the middle 50\% of client accuracies fall. The interquartile ranges (IQR) width reflects \textbf{performance consistency} - narrower boxes denote more uniform client experiences. Points beyond 1.5×IQR indicates statistical \textbf{outliers}. These reveal edge cases where clients significantly over- or under-perform relative to the group. 

\texttt{MinCut\_First} and \texttt{Label\_First} demonstrate comparable results IQR widths, indicating similar training stability. This corresponds to the above observations, where they achieve similar cut edge ratio, CoV and Shannon Entropy. On the other hand, despite prioritizing partition balancing, \texttt{Balance\_First} exhibits wider IQRs than \texttt{MinCut\_First} and \texttt{Label\_First}, with more outliers  perform unusually well or poorly in Reddit dataset.
This contradicts to the expectation that balanced partitions should yield more consistent client performance. 

\texttt{Balance\_First}’s strict size constraints disrupt natural label communities, evidenced by lower Shannon entropy in Figure \ref{fig:shannon}, creating unnatural subgraphs where clients with coherent local data overfit and disrupted data underfit. In contrast, \texttt{MinCut\_First} and \texttt{Label\_First} preserve graph homophily, yielding more predictable accuracies. This proves that partition balancing does not guarantee stable and consistent local client performances.

\subsection{Overall Time Taken} 
Since \texttt{MinCut\_First} and \texttt{Label\_First} remain to achieve similar performance in node embedding exchange time consumption and local accuracies, we analyse their total time taken, including graph partitioning and node embedding exchange. Figure \ref{fig:gpa_strat_time} shows that in all three datasets, \texttt{Label\_First} requires 1.81× longer time to complete the procedures. This is because of complex label-aware partitioning and higher cut edge ratios. In label aware partitioning, additional computation is needed to track and optimize label distribution. As \texttt{Label\_First} yield higher cut edge ratio (2.34× higher than \texttt{MinCut\_First}), it also increases communication overhead in node embedding exchange scheme.

\subsection{Conclusion} 
In conclusion, \texttt{MinCut\_First} emerges as the optimal choice, offering:
\begin{itemize}
    \item \textbf{Time Efficiency:} Fastest execution on partitioning and training
    \item \textbf{Reliable Performance:} Tightest IQRs and better accuracies
    \item \textbf{Theoretical Alignment:} Minimal edge cuts preserve graph properties is crucial for GNN training
\end{itemize}

\section{Details of Lightweight Dynamic GNN Learning} \label{appen:learning}

\subsubsection{Step 1: $k$-hop subgraph expansion} \label{sesh:learning1}
Consider the delta graph $\Delta G_{(t-1,t)}$, for a node $v \in \Delta V_{(t-1,t)}$ that is updated in snapshot $t$, all nodes within $k$-hop away should also be updated. Therefore, the client first performs a $k$-hop subgraph expansion based on all updated nodes $\Delta V_{(t-1,t)}$. 

The parameter $k$ is aligned with the number of layers in the local GNN. In message-passing, each layer aggregates information from immediate neighbours. Consequently, stacking $k$ layers allows node representations to incorporate information from nodes within their $k$-hop neighbourhood. 
 
The $k$-hop neighbours of all updated nodes at time $t$ form a subgraph, denoted as $g^k_{t}$.
\begin{equation*}
g^k_t = G_t[\mathcal{V}(g_t^k) ], \quad  \mathcal{V}(g_t^k) = \bigcup_{v \in \Delta V_{(t-1,t)}} \mathcal{N}_{G_t}^k(v)
\end{equation*}
where $\mathcal{N}_{G_t}^k(v)$ denotes the $k$-hop neighbourhood of node $v$ in $G_t$.

\subsubsection{Step 2: First GNN layer} \label{sesh:learning2}
The subgraph $g^k_{t}$ is passed through the first GNN layer, producing updated node embeddings, denoted by $\textbf{H}^{k, (1)}_t$, where the superscript $1$ indicates the first GNN layer.

A GNN layer updates node embedding via iterative neighbourhood aggregation. Mathematically, we write:
\begin{equation*}
H^{k,(1)}_{t,v} = \sigma\left( W^{(1)} \cdot \text{AGG}^{(1)}\left( \left\{ H^{(0)}_{t,u} \mid u \in \mathcal{N}(v) \cup {\{v\}} \right\} \right) \right)
\end{equation*}
where $H^{(0)}_{t,u} = X_{t,u}$ is initial node feature of node $u$, $\text{AGG}^{(1)}$ is an aggregation function (e.g., mean, sum) at the first layer, $W^{(1)}$ are learnable weights, and $\sigma$ is a non-linear activation function. 

The collection of node embeddings for all nodes $v$ in $g^k_t$ is denoted as:
\begin{equation*}
\mathbf{H}^{k,(1)}_t = \bigcup_{v \in g^k_t} H^{k,(1)}_{t,v}
\end{equation*}

\subsubsection{Step 3: GRU module} \label{sesh:learning3}
A GRU module captures the temporal relationships between consecutive graph updates. It takes as input the output of the first GNN layer, $\textbf{H}^{k,(1)}_{t}$, along with the hidden states from the previous update for the corresponding nodes in $g^k_{t}$, denoted as $\textbf{H}^{k,(1)}_{t-1}$. The GRU outputs updated hidden states, $\widehat{\textbf{H}}^{k,(1)}_{t}$, for the nodes in $g^k_{t}$.
\begin{equation*}
\widehat{\textbf{H}}^{k,(1)}_t = \text{GRU}\left( \textbf{H}^{k,(1)}_t, \textbf{H}^{k,(1)}_{t-1} \right)
\end{equation*}
The GRU takes the current and previous hidden states and outputs the temporally updated states.

\subsubsection{Step 4: Assembling the full hidden states} \label{sesh:learning4}
The updated hidden states, $\widehat{\textbf{H}}^{k,(1)}_{t}$, serve two purposes: (1) they are fed into the next GNN layer, and (2) they are combined with the hidden states of nodes not in $g^k_{t}$ to reconstruct the complete node representation for snapshot $t$ for subsequent graph snapshot.

Since nodes not in $g^k_{t}$ implies nodes that remains unchanged from the previous snapshot, their node states remains the same and can be directly reused from snapshot $t-1$. We denote the hidden states of these unaffected nodes by $\widehat{\textbf{H}}^{\overline{k},(1)}_{t-1} = \widehat{\textbf{H}}^{(1)}_{t-1} \setminus \widehat{\textbf{H}}^{k,(1)}_{t-1}$.
\begin{equation*}
\widehat{\textbf{H}}^{(1)}_t = \widehat{\textbf{H}}^{k,(1)}_t \cup \widehat{\textbf{H}}^{\overline{k},(1)}_{t-1}
\end{equation*}

\section{Node Embedding Exchange Scheme}

\subsection{Proof of Theorem \ref{thm:thm1}} \label{appen:proof}
We prove the above formula using the observation of the adjacency matrix of the global graph and the subgraphs to find the additional aggregation required to the submitted feature given by the cross-client neighbours. 

According to the formulation in Section \ref{sesh:formulation}, ignoring the timestamp $t$, we consider the global graph's adjacency matrix $\mathbf{Adj}$, where the $i^{th}$ row and $j^{th}$ column of this matrix is $1$ if there exists $(i,j) \in E_t$ and otherwise 0. Note that the adjacency matrix includes self-loops as well. We also define $\mathcal{\textbf{Z}}^{(0)} = \textbf{X}$ as the 0-hop node feature matrix. 

The global 1-hop feature matrix is computed by
\begin{equation}
    \mathcal{\textbf{Z}}^{(1)} = \mathbf{Adj} \cdot \mathcal{\textbf{Z}}^{(0)}
\end{equation}
We can get the 2-hop feature matrix by

\begin{equation}\label{eq:sqr_matx}
    \mathcal{\textbf{Z}}^{(2)} = \mathbf{Adj} \cdot \mathcal{\textbf{Z}}^{(1)} = \mathbf{Adj} \cdot (\mathbf{Adj} \cdot \mathcal{\textbf{Z}}^{(0)}) = \mathbf{Adj}^2 \mathcal{\textbf{Z}}^{(0)}
\end{equation}

To form the global 1-hop feature vector of each node $i$ (i.e. $\mathcal{\textbf{z}}^{(1)}_i$), we start with its local 1-hop feature submitted by the client holding it. Let such client's subgraph be denoted by $s$. Since $\mathcal{\textbf{Z}}^{(0)}$ is 0-hop node feature matrix which is fixed across clients, we can discard $\mathcal{\textbf{Z}}^{(0)}$ when comparing the global and local feature aggregations and generalize the computation of node $i$'s global 1-hop feature as:
\begin{equation*}
    \mathbf{Adj}_i \mathcal{\textbf{Z}}^{(0)} = \mathbf{Adj}^{(s)}_i \mathcal{\textbf{Z}}^{(0)} + \textbf{c}^{\top}_i \mathcal{\textbf{Z}}^{(0)}
\end{equation*}
implies 
\begin{equation} \label{eq:1hopadj}
    \mathbf{Adj}_i = \mathbf{Adj}^{(s)}_i + \textbf{c}^{\top}_i
\end{equation}

Subtracting $\mathbf{Adj}^{(s)}_i$ from $\mathbf{Adj}_i$, $\textbf{c}^{\top}_i$ is a vector such that its $j^{th}$ entry:
$$
c_{i,j} = 
\begin{cases}
0 & \text{if node } j \text{ is in global graph } G \text{ and subgraph } s  \\
1  & \text{if node } j \text{ is in global graph } G \text{ but not in subgraph } s
\end{cases}
$$

In other words, the non-zero entries in $\textbf{c}_i$ correspond to the \textbf{cross-client neighbours} of node $i$ (i.e. $\hat{\mathcal{N}}_i$). This implies the exact 1-hop global node embeddings for node $i$ is the sum of the 1-hop feature of node $i$ (derived from $\mathbf{Adj}_{i}^{(s)}$) and the 0-hop features of all nodes $j \in \hat{\mathcal{N}}_i$ (derived from $\textbf{c}_i$):

\begin{equation*}
    H^{*,(1)}_i = H^{k,(1)}_i + \sum_{j \in \hat{\mathcal{N}}_i} H^{(0)}_j
\end{equation*}
proving the first Equation of computing 1-hop node embedding in the main text.

We now proceed to the 2-hop case by considering the cross-client neighbours $\hat{\mathcal{N}}_i$. By equating $\mathbf{Adj}_i^2$ and $(\mathbf{Adj}_i^{(s)})^2$, we obtain the additional nodes and their values needed to compute the exact 2-hop global node embedding for node $i$. 
\begin{equation} \label{eq:2hopadj}
    \mathbf{Adj}_i^2 = (\mathbf{Adj}_i^{(s)})^2 + \textbf{d}^{\top}_i
\end{equation}

To compute the values of $\textbf{d}^{\top}_i$, we rearrange Equation \ref{eq:2hopadj} and substitute Equation \ref{eq:1hopadj} to the equation:
\begin{align*}
    \mathbf{Adj}_i^2 &= (\mathbf{Adj}_i^{(s)})^2 + \textbf{d}^{\top}_i \\
    \mathbf{Adj}_i \cdot \mathbf{Adj} &= \mathbf{Adj}_i^{(s)} \cdot \mathbf{Adj}^{(s)} + \textbf{d}^{\top}_i \\
    \textbf{d}^{\top}_i &= \mathbf{Adj}_i \cdot \mathbf{Adj} - \mathbf{Adj}_i^{(s)} \cdot \mathbf{Adj}^{(s)} \\
     &= (\mathbf{Adj}_i^{(s)} + \textbf{c}^{\top}_i) \cdot \mathbf{Adj} - \mathbf{Adj}_i^{(s)} \cdot \mathbf{Adj}^{(s)} \\
     &= \mathbf{Adj}_i^{(s)} \cdot \mathbf{Adj} + \textbf{c}^{\top}_i \cdot \mathbf{Adj} - \mathbf{Adj}_i^{(s)} \cdot \mathbf{Adj}^{(s)} \\
     &= \mathbf{Adj}_i^{(s)} (\mathbf{Adj} - \mathbf{Adj}^{(s)}) + \textbf{c}^{\top}_i \cdot \mathbf{Adj}
\end{align*}
which yields two terms $\mathbf{Adj}_i^{(s)} (\mathbf{Adj} - \mathbf{Adj}^{(s)})$ and $\textbf{c}^{\top}_i \cdot \mathbf{Adj}$. 

\textbf{Consider the first term $\mathbf{Adj}_i^{(s)} (\mathbf{Adj} - \mathbf{Adj}^{(s)})$.} 

$(\mathbf{Adj} - \mathbf{Adj}^{(s)})$ implies the set of nodes that are not in subgraph $s$, while $\mathbf{Adj}^{(s)}_{i,j} = 0$ for all $j$ not in subgraph $s$, we get a single row vector filled with zeros. 

\textbf{Consider the second term $\textbf{c}^{\top}_i \cdot \mathbf{Adj}$.}

From the above proof, we know that $c^{\top}_i$ records the cross-client neighbours of node $i$ and $\mathbf{Adj}_j$ is the 1-hop neighbour of node $j$, including itself, in the global graph. 
Multiplying $\textbf{c}^{\top}_i$ by $\mathbf{Adj}$ gives a row vector $\textbf{m}$ such that each entry $m_u$ in $\textbf{m}$ implies \textbf{the number of times node $u$ has been visited by any cross-client neighbours $j \in \hat{\mathcal{N}}_i$}. 

Combining the two terms, $\textbf{d}^{\top}_i$ is simply $\textbf{c}^{\top}_i \cdot \mathbf{Adj}$ where it reveals nodes that are reachable in 2 hops from node $i$ via cross-client edges only.

We expand $\textbf{d}^{\top}_i$ to obtain the corresponding node embeddings needed to complete the 2-hop neighbourhood of node $i$. Firstly, the cross-client neighbours $j \in \hat{\mathcal{N}}_i$ provide their local 1-hop node embeddings (i.e. $\sum_{j \in \hat{\mathcal{N}}_i} H^{k,(1)}_{j}$) which are unreachable from node $i$ locally. Next, each 1-hop cross-client neighbour should visit node $i$ once in a global graph, vice versa. To record this interaction, we include $\sum_{j \in \hat{\mathcal{N}}_i} (H^{(0)}_{j} + H^{(0)}_{i})$. Finally, if there exist 2-hop cross-client neighbours, meaning the 1-hop cross-client neighbours of $j \in \hat{\mathcal{N}}_i$, it is also indirectly visited by all cross-client neighbours $j$. Therefore, we include $\sum_{q \in \hat{\mathcal{N}}_j} H_q^{(0)}$.

As a result, the node embeddings needed to complete the 2-hop neighbourhood of node $i$ in addition to $H_i^{(2)}$ are:
\begin{equation*} 
    \sum_{j \in \hat{\mathcal{N}}_i} (H^{k,(1)}_{j} + H^{(0)}_{j} + H^{(0)}_{i}) + \sum_{q \in \hat{\mathcal{N}}_j} H_q^{(0)}
\end{equation*}
proving the second equation in the main text.

\subsection{Experiments on various Embedding Synchronisation Schedules} \label{appen:ne_experi}
\begin{table*}
\centering
  \caption{F1 Score and Total Communication Time on Node Classification Tasks with varying Node Embedding Exchange Strategies}
  \label{tab:ne_strat}
  \begin{tabular}{cccc|ccc}
    \toprule
     Node Embedding& \multicolumn{3}{c|}{\textbf{F1}} & \multicolumn{3}{c}{\textbf{Total Communication Time (s)}} \\
     Exchange Strategies & DBLP5 & DBLP3 & Reddit & DBLP5 & DBLP3 & Reddit\\
    \midrule
    After first epoch, all rounds& 0.7215& 0.6558& 0.2654& 4.4994& 1.4910& 221.0935\\
    After last epoch, all rounds& \textbf{0.7273}& 0.6595& 0.2703& 4.5242& 1.5013& 221.5287\\
    After last epoch, first round& 0.7269& \textbf{0.6616}& \textbf{0.2784}& \textbf{0.4309}& 0.1446& 22.5554\\
    After last epoch, mid round& 0.7167& 0.6600& 0.2753& 0.4410& \textbf{0.1429}& \textbf{22.1005}\\
    \bottomrule
  \end{tabular}
\end{table*}
Frequent exchange of embeddings can incur significant communication overhead. To investigate the trade-off between performance and communication efficiency, we evaluate four embedding exchange strategies: 1) Exchanging after the first local epoch in every communication round; 2) Exchanging after the last local epoch in every communication round; 3) Exchanging after the last local epoch only in the first communication round; 4) Exchanging after the last local epoch at the midpoint of all the communication rounds. We conduct experiments on the above four options to measure their model performance (F1) and total communication time during node embedding exchange in Table \ref{tab:ne_strat}. 

Experimental results demonstrate that the most effective strategy is Option 3, where embeddings are exchanged only once, after the last local epoch for the first communication round, achieving a favourable balance between performance and communication cost. Comparing with Option 1 and 2, Option 3 achieves F1 scores either comparable to (DBLP5: $\Delta =$ -0.0004) or superior than (DBLP3: +0.21\%, Reddit: +3.0\% to next best) the first two options, while reducing communication time by \textbf{10.4$\times$} (DBLP5) to \textbf{9.8$\times$} (Reddit). This demonstrates that exchanging solely in the first round is sufficient to maintain performance while optimizing efficiency.

In terms of exchanging in a single round, Option 3 consistently outperforms Option 4 across all datasets, with improvements ranging from 0.2\% (DBLP3) to 1.4\% (DBLP5). Because of the low baseline performance in Reddit, even small gains in F1 are significant (1.1\% gain in F1). Since both strategies exchange for a single round, they also share comparable communication time ($\Delta \leq $ 2.0\%). This suggests that early exchanges in the first round better preserve graph structure for model learning. Thus Option 3 is strictly preferable for balancing performance and overhead.

In conclusion, Option 3 achieves the \textit{minimum necessary alignment} to stabilize learning without repeated overhead.

\subsection{Communication Cost of Node Embedding Exchange}\label{appen:ne_comcost}
We formalise the communication cost $C$ for node embedding exchanges as:
\begin{align*}
    C &= N \cdot (Cut\_Edge\_Ratio \cdot |\mathcal{E}| \cdot D_{bytes}) + T \cdot L
\end{align*}
where
\begin{itemize}
    \item $N$ is the number of exchange rounds
    \item $Cut\_Edge\_Ratio$ is the average cut edge ratio across snapshots
    \item $|\mathcal{E}|$ is the total number of edges in the dataset
    \item $D_{bytes}$ is the node embedding dimension in bytes (dimension $D \times 4$ for float32)
    \item $T$ is the number of snapshots
    \item $L$ is the fixed latency per round
\end{itemize}
While we picked Option 3 as the node embedding exchange strategy and \texttt{MinCut\_First} as the graph partitioning configuration, we yield the following statistic in Table \ref{tab:comcost_param}. (Here we set $L \approx 0.1s$)

\begin{table}
    \centering
    \begin{tabular}{c|c|c|c}
    \toprule
         \textbf{Parameters} & \textbf{DBLP5}& \textbf{DBLP3}& \textbf{Reddit}\\
         \midrule
         $N$&  1&  1& 1\\
         $Cut\_Edge\_Ratio$ & 0.0006&0.0006& 0.036\\
         $|\mathcal{E}|$ &  42K&  24K& 244K\\
         $T$& 8& 8&8\\
         $D$& 100 & 100 & 20\\
         $D_{bytes}$& 400 & 400 & 80\\
         \bottomrule
    \end{tabular}
    \caption{Parameter values for each dataset}
    \label{tab:comcost_param}
\end{table}
\begin{table}
    \centering
    \begin{tabular}{c|c|c|c}
    \toprule
     &  \textbf{DBLP5}&  \textbf{DBLP3}& \textbf{Reddit}\\
     \midrule
     $C$& 10.08KB + 0.8s & 5.76KB + 0.8s & 702.72KB + 0.8s \\
     \bottomrule
    \end{tabular}
    \caption{Communication Cost $C$ of Node Embedding Exchange for each dataset}
    \label{tab:com_result}
\end{table}
Therefore, the calculation of communication cost $C$ for each dataset is shown in Table \ref{tab:com_result}. For DBLP5 and DBLP3, costs remain minimal, aligning with their minimal cut edges and effective \texttt{MinCut\_First} partitioning ($\leq$ 0.01 cut edge ratios). In contract, Reddit incurs 702.72KB of traffic, 70x higher than DBLP5, despite its lower cut edge ratio. This arises from Reddit's massive edge count ($\approx 244K$ edges), where even a small ratio yields 8788 absolute cuts. The observed 112x longer exchange time (56s vs. 0.5s) further reflects this network protocol overhead.

\section{Experiment Details} \label{appen:exp_detail}

\subsection{Details of Learning Tasks}
(1) Link prediction aims to predict future edges based on existing node and edge information. After learning node embeddings from both current and historical graph structures, the likelihood of an edge \( (u, v) \) is obtained by computing the dot product of the embeddings of nodes \( u \) and \( v \). Following standard practice \cite{ROLAND}, we evaluate the prediction model using Mean Reciprocal Rank (MRR) and Mean Average Precision (MAP), supplemented by test average for robustness.
MRR quantifies how highly the \textit{first} correct link is ranked for each query node, crucial for applications such as recommender system where users rely on the top-ranked results. MAP, in contrast, assesses the ranking quality of all correct links, reflecting overall prediction consistency.

(2) Node classification involves assigning labels to nodes based on their features and graph structure. We follow the experimental setup of FedDGL \cite{FedDGL}, which also addresses federated dynamic graph learning, and use the F1 score as the evaluation metric. Compared to accuracy, F1 gives a more nuanced view by penalising false positives and false negatives. Instead of prioritising on majority labels, F1 shows whether the model generalizes well to minority and edge-case nodes, which is crucial in real-life graphs.

Additional details: For all link prediction tasks, we apply negative sampling, generating an equal number of negative edges as positive ones for each client to ensure a balanced dataset.

\subsection{Temporal Evaluation Protocol} \label{appen:temporal_eval}
Our training pipeline follows a \textbf{temporally shifted evaluation scheme}, identical to the live-update evaluation in FedDGL \cite{FedDGL}. Specifically, the global model is trained using snapshot $t$, validated on snapshot $t+1$, and tested on snapshot $t+2$. During training on each snapshot, clients do not explicitly observe cross-client edges. Such edges are only implicitly incorporated through the node embedding exchange scheme. This protocol resembles closely to real-world dynamic graph systems, where snapshots arrive frequently and models are typically trained on historical data to make predictions on future graph states.

\subsection{Dataset Descriptions}
We conduct experiments using six standard datasets and two large-scaled temporal benchmark datasets\cite{huang2024tgb2}.

(1) UCI-Message (UCI) is a link prediction dataset consisting of private messages exchanged among students on an online social network system, used to study social dynamics and message flow \cite{UCI}. Each edge $(u,v,t)$ in the dataset implies user $u$ sent a private message to user $v$ at time $t$. With a total time span of 193 days, we made use of a 5-day window which results in 39 time steps.

(2) Bitcoin-OTC (OTC) is a link prediction dataset representing who-trusts-whom networks of users trading on the OTC platform, where edges indicate trust relationships between users \cite{OTC}. There are 138 such networks of sequential time steps.

(3) DBLP3 and (4) DBLP5 are node classification datasets derived from the DBLP citation network, focusing on a subset of computer science research papers and their citation relationships \cite{Social_Network}. Both datasets consist of 10 graph snapshots, each with an adjacency matrix, feature matrix and one-hot encoded label matrix.

(5) Reddit is a node classification dataset constructed from a large social network graph on Reddit, where nodes represent users and edges indicate co-participation in discussions or shared subreddit memberships \cite{Reddit}. Similarly, Reddit has 10 graph snapshots with the same attributes as DBLP5 and DBLP3.

(6) AS-733 is a large-scaled link prediction dataset comprising 733 daily subgraphs of routers which span an interval of 785 days from November 8 1997 to January 2 2000, formed as Autonomous Systems (AS). Each AS exchanges traffic flows with some neighbours. In contrast to the existing networks mentioned, where nodes and edges only get added and not deleted over time, this dataset also exhibits deletion of nodes and edges over time \cite{as-733}. 

(7) tgbl-coin is a cryptocurrency transaction dataset based on the Stablecoin ERC20 transactions dataset. Each node is an address and each edge represents the transfer of funds from one address to another at a time. The network starts from April 1st, 2022, and ends on November 1st, 2022, and contains transaction data of 5 stablecoins and 1 wrapped token. This duration includes the Terra Luna crash where the token lost its fixed price of 1 USD.

(8) tgbn-reddit is a users and subreddits interaction network. Both users and subreddits are nodes and each edge indicates that a user posted on a subreddit at a given time. The dataset spans from 2005 to 2019.

Table \ref{tab:data_stat} shows the number of nodes, edges, classes and the type of learning task for each dataset.
\begin{table}
    \centering
    \small
    \begin{tabular}{c|c|c|c|c}
    \toprule
         \textbf{Dataset}&  \textbf{\# Nodes}&  \textbf{\# Edges}& \textbf{\# Classes} & \textbf{Learning Task}\\
         \midrule
         DBLP5&  6,606&  42,812& 5 & Node Classification\\
         DBLP3&  4,257&  23,538& 3 & Node Classification\\
         Reddit&  8,291&  244,552& 4 & Node Classification\\
         OTC&  6,005&  355,096& 2 & Link Prediction\\
         UCI&  1,899&  575,971& 2 & Link Prediction\\
         AS-733&  7,716&  11,965,533& 2 & Link Prediction\\
         \midrule
         tgbl-coin&  638,486&  22,809,486& 2 & Link Prediction\\
         tgbn-reddit&  11,766&  27,174,118& 2 & Node Classification\\
         \bottomrule
    \end{tabular}
    \caption{Dataset Statistics}
    \label{tab:data_stat}
\end{table}

\subsection{Description of Existing Graph Partitioning Algorithms} \label{appen:exist_gpa}
Label-skew and Louvain are used in FedDGL \cite{FedDGL}, where Label-skew simulates non-IID data scenarios and Louvain is a community detection algorithm to group densely connected nodes. Metis is used in Aligraph \cite{Aligraph} to partition vertices into subgraphs with balanced node counts while minimizing edge cuts. As mentioned, these existing approaches fail to handle disconnected graphs. For instance, Metis assigns all isolated nodes to one subgraph while the remaining connected components to another to minimize cut edges. This leads to an imbalanced workload across client, thereby reducing computational efficiency in distributed learning. To ensure fair comparison, we also introduce synthetic edges to connected components before employing the existing partitioning approaches.

\subsection{Baseline Model and Method Descriptions}
Existing dynamic graph learning models, GCRN-Baseline, GCRN-GRU, GCRN-LSTM, EvolveGCN and ROLAND, have been primarily evaluated on link prediction tasks, while collaborative graph learning models, D-FedGNN, FedAvgDyn, FedProtoDyn and FedDGL, have focused on node classification. Since DG-CoLearn supports both link prediction and node classification, we evaluate it against models from both domains accordingly.

Among the collaborative learning baselines, FedAvgDyn and FedProtoDyn are adapted from EvolveGCN \cite{EvolveGCN}, where the local training model at each client is EvolveGCN, while the aggregation method at the server follows FedAvg and FedProto, respectively.

Each result is obtained by taking an average of 3 trials of experiments.

\subsection{Additional Findings from ROLAND's performance}

Table \ref{tab:LP} reveals that while ROLAND \cite{ROLAND} achieves high MAP (indicating broad correctness), its lower MRR suggests a ranking bias: high-degree nodes dominate top predictions, even when less relevant. This aligns with known challenges in graph-based ranking, where structural properties can overshadow semantic relevance.

\subsection{Experiment Result on tgbn-reddit} \label{appen:tgb}
The full result of large scaled temporal graph tgbn-reddit for node classification task is illustrated in Table \ref{tab:tgbn_reddit}, and tgbl-coin for link prediction task is reported in Table \ref{tab:tgbl_coin}.

\begin{table}
\centering
\small
\caption{Efficiency and performance on tgbn-reddit (node classification). DG-CoLearn improves both system efficiency and predictive quality.}
\label{tab:tgbn_reddit}
\begin{tabular}{lccccc}
\toprule
\textbf{Method} 
& Training Time (s)& Partition Time (s)& Cut Edge Ratio (\%)& NE Exchange Time (s)& F1 Score\\
\midrule
FedDGL 
& 117.28 $\pm$ 23.31
& 11.44 $\pm$ 4.85
& 11.7 $\pm$ 3.69 
& N/A 
& 0.0929 $\pm$ 0.0513 \\

DG-CoLearn 
& \textbf{53.56 $\pm$ 10.58}
& \textbf{3.88 $\pm$ 1.52}
& \textbf{2.72 $\pm$ 1.27}
& 3.72 $\pm$ 1.48 
& \textbf{0.1040 $\pm$ 0.0468} \\

\midrule

Improvement
& 2.19$\times$
& 2.95$\times$
& 76.8\% $\downarrow$
& --
& +11.98\% \\

\bottomrule
\end{tabular}
\end{table}

\begin{table}
\centering
\small
\caption{Scalability results on tgbl-coin (link prediction). DG-CoLearn achieves substantial speedups and reduced communication overhead.}
\label{tab:tgbl_coin}
\begin{tabular}{lccccc}
\toprule
\textbf{Method} & $\downarrow$ Training Time (s) & $\downarrow$ Partition Time (s) & $\downarrow$ Cut Edge Ratio (\%) & $\downarrow$ NE Exchange Time (s) & Completed \\
\midrule
Full Retrain
& 5126.9 $\pm$ 483.9
& 1553.4 $\pm$ 118.5
& 3.35 $\pm$ 0.89
& 279.8 $\pm$ 23.05 
& No \\

DG-CoLearn 
& \textbf{151.6 $\pm$ 11.7}
& \textbf{30.09 $\pm$ 3.40} 
& \textbf{2.30 $\pm$ 0.81}
& \textbf{10.22 $\pm$ 3.11}
& Yes \\
\midrule

Improvement
& 33.8$\times$
& 51.6$\times$
& 31.3\% $\downarrow$
& 27.4$\times$
& -- \\

\bottomrule
\end{tabular}
\end{table}

\subsection{Compute Resources}
All experiments were conducted on a single workstation equipped with an NVIDIA RTX 4090 GPU (24GB VRAM). Per-dataset training, partitioning, and node embedding exchange timings reported in Tables 4 and 10 and Figures 4–5 were measured on this hardware. The total compute across reported experimental runs was on the order of approximately 10 GPU-hours. Preliminary experiments and hyperparameter exploration during development required additional compute beyond the reported runs.

\section{Details on Ablation Studies} \label{appen:abla}

\subsection{Node Embedding Exchange Ablation}
\subsubsection{Baseline Selection Rationale}
For the ablation study, we focus on isolating the effect of \textbf{node embedding exchange} under the same privacy and partitioning constraints as DG-CoLearn. Specifically, we select the following baselines:
\begin{itemize}
    \item \textbf{No Node Embedding Exchange}: This baseline disables any form of cross-client information sharing. Each client performs local GNN training using only intra-client edges, and the server aggregates model parameters via FedAvg without exchanging node-level representations. This setting serves as a lower bound, highlighting the impact of ignoring cross-client dependencies entirely.
    \item \textbf{Incremental FedGCN}: FedGCN is a representative federated learning method that exchanges aggregated node features to approximate cross-client neighbourhood information. We select FedGCN as it is one of the few methods that explicitly addresses missing neighbours under client-oblivious privacy constraints, making it the closest comparable baseline for our embedding exchange mechanism.
\end{itemize}

\subsubsection{Implementation of the Incremental FedGCN Baseline}
FedGCN was originally proposed for static graphs and is not directly applicable to dynamic graph learning or DGL-style temporal processing. To enable a fair comparison, we adapt FedGCN to our experimental setting as follows.

At each snapshot $t$, each client $m$ computes local feature aggregations for its nodes based on its local adjacency matrix which now only includes nodes within 2-hops of all changed nodes:
\begin{equation*}
    \mathcal{F}^{(m)}_i = \sum_{j \in \mathcal{N}^{(m)}_i}\mathbf{Adj}^{(m)}_{i,j} \cdot x_j
\end{equation*}
where $\mathcal{N}^{(m)}_i$ denotes the intra-client neighbours of node $i$, and $x_j$ is the input features of node $j$.

Upon receiving all feature aggregations from the clients, the server, who has access to the full graph topology, aggregates cross-clients contributions by summing feature aggregations from all relevant clients.
\begin{equation*}
    \mathcal{F}_i^* = \sum_{m \in M} \mathcal{F}^{(m)}_i
\end{equation*}
where $M$ is the set of clients with nodes that are cross-client neighbours $\hat{\mathcal{N}}_i$ of node $i$, and within the update subgraph.

The resulting $\mathcal{F}^*_i$ is then used as the updated input feature for node $i$ in the next training round. This procedure mirrors the original FedGCN aggregation logic while allowing incremental execution across snapshots.

Importantly, this approach replaces original node features with aggregated features and does not preserve intermediate representations across GNN layers. As a result, higher-order message passing interactions between cross-client neighbours are not explicitly modelled.

\subsubsection{Setup of Node Embedding Exchange Ablation Study}
To explicitly evaluate the effectiveness of our node embedding exchange mechanism, we design a server-side evaluation task restricted to cross-client edges. This setting directly assesses how well different methods recover relational information that is not locally observable by any single client.

We consider a \textbf{cross-client edge prediction (CCE) task}, where the objective is to predict the existence of edges connecting nodes that belong to different clients. \textit{Positive samples} consist of ground-truth cross-client edges from the current snapshot. In alignment with the temporal evaluation protocol in Appendix \ref{appen:temporal_eval}, this choice reflects a realistic inference scenario in which the server evaluates how well the model trained on snapshot $t$ has internalised cross-client structural relationships present at the timestamp, despite such edges never being directly visible to any single client during training. These edges are extracted from the global edge set by filtering for node pairs whose endpoints belong to different clients under the fixed node-to-client assignment. Only edges satisfying this criterion are included in the CCE evaluation. \textit{Negative edges} are sampled by first selecting two distinct clients uniformly at random, then sampling one node from each client, and finally rejecting pairs that correspond to existing edges. For each snapshot, we sample an equal number of negative cross-client edges as positive ones. This procedure ensures that all evaluated node pairs are cross-client while avoiding false negatives.

We report the Mean Average Precision (MAP) on the constructed CCE evaluation test. This is because the primary objective of CCE prediction task is to assess the overall quality of the learned ranking over candidate cross-client edges.

\subsection{Incremental Graph Processing Ablation}
\subsubsection{Baseline Selection Rationale}
To assess the efficiency gains introduced by \textbf{incremental processing}, we introduce an additional ablation variant termed \textbf{Full-graph Node Embedding Exchange}.

This variant preserves the same node embedding reconstruction mechanism as DG-CoLearn, but disables incremental snapshot processing. In particular, node embedding exchange is performed for all cross-client nodes in the graph at every snapshot, rather than being restricted to updated cross-client nodes. Overall speaking, this baseline is served as a upper bound on communication and computation cost.

\section{Additional Experiments -- Evaluating Catastrophic Forgetting} \label{appen:catforget}
\begin{figure}[t]
  \centering
  \includegraphics[width=0.5\linewidth]{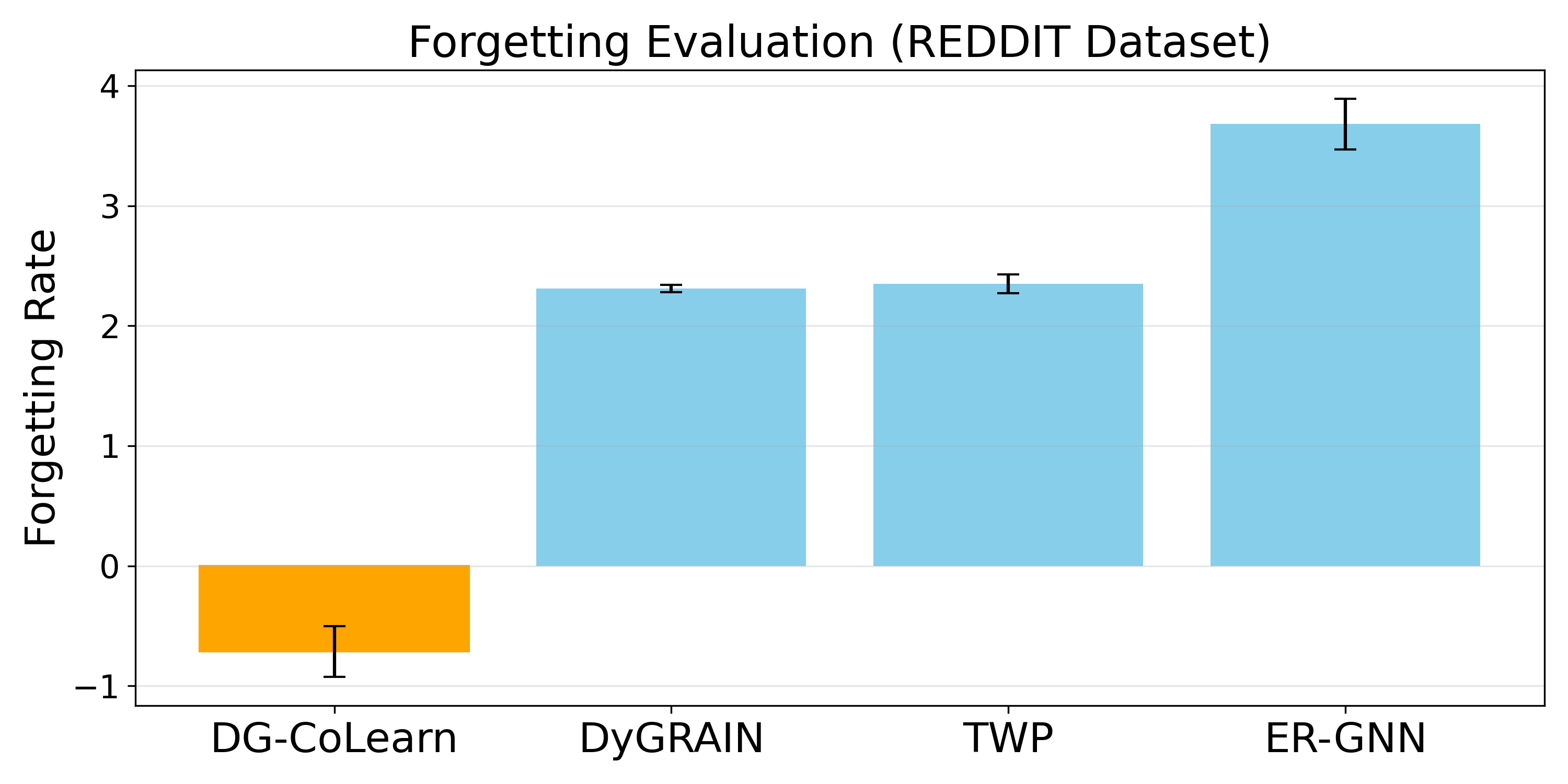}
  \caption{Measuring Catastrophic Forgetting across incremental learning methods}
  \label{fig:forgetting}
\end{figure}

A key challenge in incremental learning is \textit{catastrophic forgetting}, where the model loses previously acquired knowledge while adapting to new data. To evaluate the robustness against this phenomenon, we adopted the same practice as DyGRAIN and use \textit{Forgetting} \cite{forgetting} to measure the degree of catastrophic forgetting. 

\subsection{Baseline Models}
Incremental graph learning methods broadly fall into three categories: replay-based methods, regularization-based methods, and influence-based methods.

Replay-based methods retrain historical data stored in a memory. These sampled past data are mixed into the current data during training to relieve catastrophic forgetting. Representative method is ER-GNN \cite{ER-GNN}. Regularisation methods penalise the change of \textit{important} parameters, while the importance is typically measured by gradient distributions of the data. TWP is a recent work that quantifies importance via previous tasks and topological attention between adjacent nodes \cite{TWP}. Influence-based methods focus on updating parts of the graph that are actually affected by new information. DyGRAIN \cite{DyGRAIN} uses structure-aware computation to select nodes that are influenced by new edges and vulnerable nodes that might forget past knowledge. While their approach is similar to DG-CoLearn, they only consider addition of nodes and edges, ignoring the potential of element deletion. 

\subsection{Experimental Results}
Figure \ref{fig:forgetting} illustrates the Forgetting evaluation across various existing incremental learning approaches, DyGRAIN, TWP and ER-GNN. While our evaluation focuses on Reddit, this dataset is the largest node classification benchmark, which leads to the highest risk of overwriting previous knowledge as a vast amount of new graph data flows in. Therefore, it effectively tests scalability and memory efficiency of incremental methods.

As the existing methods have positive Forgetting values, DG-CoLearn results in a \textbf{negative} value, which implies the model not only remembered the previous knowledge, but also \textbf{improved} its performance after learning new tasks. This is commonly regarded as \textit{forward transfer}, which is a strong indicator of \textbf{effective knowledge reuse}. DG-CoLearn's ability to achieve negative forgetting stems from its targeted learning of changed graph components and modular integration with prior knowledge. By isolating and training only the dynamic components, we prevent interference with unchanged components. While the previous frozen model retains 100\% of the past task (zero forgetting), the combination of the two models additively enhance the model's performance on both new and old data, effectively turning forgetting into improvement.

\section{Privacy Guarantee} \label{appen:privacy}

\subsection{Privacy Model Formulation}
DG-CoLearn is designed under the standard \textbf{trusted-server, client-oblivious} privacy model, widely used in intra-graph distributed learning (e.g., FedSage+, LLCG). Specifically: 

(1) The server is the source of graph data collection and hence has access to the global topology. This is practically motivated: the server harnesses the collective computing power of the clients to achieve distributed learning, which speeds up the training and potentially enables the training of larger-scale graphs compared to training all graphs on a single server. Furthermore, the server is a trusted coordinator to the clients. Under this collaborative graph learning setting, privacy concern mainly focuses on whether clients are aware of the graph data that are not allocated to them by the server. This is a standard in related literature such as FedSage+, and recent work FedGNNLDP \cite{FedGNNLDP}.

(2) Once the server partitions the graph and distributes subgraphs to clients, each client only has access to their local subgraph and has no visibility to other client's. In particular, no client can infer cross-client edges, other client's raw node features, or their local topologies; Clients cannot directly communicate with other clients, and only receive their own updated embeddings which are computed by the server via a well-formulated aggregation (Equations \ref{eq:glob1}–\ref{eq:glob2}, Theorem \ref{thm:thm1}).

Formally, we have:
\begin{proposition}[Client Privacy]
    For client $m$ at snapshot $t$, the view of the client under DG-CoLearn consists only of the embeddings $H^{*,(l)}_{t,i}$ for $i \in V_t^{(m)}$ (all vertices held by client $m$ at time $t$). No other client’s raw features or edges are revealed.
\end{proposition}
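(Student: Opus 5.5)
The proof is by direct inspection of the protocol: enumerate every message that reaches client $m$ during one snapshot $t$, and check that each is a function of data the client already owns, plus server-computed aggregate sums whose summands the client cannot separate. Under the trusted-server, client-oblivious model, clients never talk to each other. So the client's view $\mathcal{V}_m$ is the union of three things: its own input $G_t^{(m)}$ as given by the partitioner (Section~\ref{sesh:gpa}); the FedAvg-aggregated global parameters; and the per-node correction messages of Figure~\ref{fig:ne-exchange}b. I will first formalise $\mathcal{V}_m$ as exactly this tuple. I will then argue that the parameters are a data-independent average already present in standard FL, so they are outside the structural-privacy claim. The statement's content therefore reduces to the embedding messages.

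The main step uses Theorem~\ref{thm:thm1}. For each $i \in V_t^{(m)}$ in the affected region $g_t^k$, the server returns only the correction $H^{*,(l)}_{t,i} - H^{k,(l)}_{t,i}$ for $l\in\{1,2\}$. The client adds it to its own local embedding, so its view contains just $H^{*,(l)}_{t,i}$. Nodes outside $g_t^k$ keep reused states, by Equation~\ref{eq:learning4}. No message is indexed by a node outside $V_t^{(m)}$, and none carries a vertex identifier, edge list or label belonging to another client. By Equations~\ref{eq:glob1}--\ref{eq:glob2}, each correction is a single vector: a sum over $\widehat{\mathcal{N}}_i$, and over $\widehat{\mathcal{N}}_j$, of foreign features $H^{(0)}_{t,j}$, $H^{(0)}_{t,q}$ and foreign local embeddings $H^{k,(1)}_{t,j}$. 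So the client receives $d$ numbers per node per layer. It never receives the multiset of summands, their number $|\widehat{\mathcal{N}}_i|$, or the identities of $j$ and $q$. This establishes the first sentence of the proposition.

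For the second sentence, I will use a simulation and indistinguishability argument. I will show that the map from other clients' data $\{(X_t^{(m')},E_t^{(m')},Y_t^{(m')})\}_{m'\neq m}$ and the cross edges to $\mathcal{V}_m$ is many-to-one. Concretely, I will construct two distinct global graphs that agree on $G_t^{(m)}$ and produce identical views. One construction permutes identities of foreign nodes. Another splits one foreign neighbour $j$ with feature $x$ into two neighbours with features $x_1+x_2=x$, adjusting the second-hop terms consistently. Both leave every sum in Equations~\ref{eq:glob1}--\ref{eq:glob2} unchanged, so no raw feature, edge or label of another client is determined by $\mathcal{V}_m$.

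The main obstacle is the 2-hop correction. The summand $H^{(0)}_{t,i}$, repeated $|\widehat{\mathcal{N}}_i|$ times, is known to the client, which could subtract multiples of its own feature and potentially learn the cross-degree. The foreign $H^{k,(1)}_{t,j}$ also pass through the shared weights. I will try first to show that the feature-splitting construction can be chosen to preserve the cross-degree, so that the count is not revealed. If that fails, I will weaken the claim to: no raw features or edges are revealed, at most aggregate counts. This still matches the proposition as stated.
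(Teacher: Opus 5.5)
\textbf{Verdict: there are genuine gaps in two steps.}

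The paper does not prove this proposition separately. It reads it off the protocol description (item (2) of the privacy model) and supports it with Propositions~\ref{prop:1}--\ref{prop:4}, which argue that each correction, taken in isolation, has a non-unique decomposition. Your enumerate-then-simulate plan has the right shape and is more demanding than that, but two steps fail as written.

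The first gap is the model parameters. The FedAvg parameters are not data-independent: they average models trained on every client's features, edges and labels. For example, with two clients and known averaging weights, client $m$ recovers the other client's trained parameters exactly by subtracting its own contribution. So you cannot argue this channel away. The first sentence of the proposition (``consists only of the embeddings'') holds only if the parameter channel is excluded by hypothesis. The paper does this tacitly: its threat model lists the global parameters, but nothing later analyses them. Make it an explicit scoping assumption, not a lemma.

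The second gap is the feature-splitting construction. It does not in general leave Equation~\ref{eq:glob2} unchanged, and your fallback does not save the claim.
\begin{itemize}
\item Each extra cross-client neighbour of $i$ adds a fixed number of copies of $H^{(0)}_{t,i}$, which client $m$ knows. It also adds an $H^{k,(1)}$ term and second-hop terms. To keep $\delta^{(1)}_{i'}$ fixed for other nodes $i'\in V_t^{(m)}$ adjacent to $j$, these second-hop terms include features that client $m$ itself holds.
\item With nonnegative features and ReLU, every one of these summands is nonnegative, so nothing can compensate. Client $m$ can therefore compute a coordinatewise upper bound on $|\widehat{\mathcal{N}}_i|$ from $\delta^{(2)}_i-\delta^{(1)}_i$ and $H^{(0)}_{t,i}$.
\item Suppose the ReLU zeroes a coordinate of $H^{k,(1)}_{t,j}$ on which $H^{(0)}_{t,i}$ is positive, and $j$ has no other cross-client neighbour. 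Then that bound is $1$, and $\delta^{(1)}_i$ \emph{is} the raw feature $H^{(0)}_{t,j}$ of a foreign node. This is the degenerate case the paper itself concedes in Proposition~\ref{prop:3}.
\end{itemize}
So ``at most aggregate counts leak'' is not a harmless weakening: a leaked count of one is a leaked feature. In such configurations the feature part of the second sentence is false, so no argument can establish it without further hypotheses. You need explicit hypotheses under which the cross-degree is unidentifiable, for example unconstrained real-valued features plus a third client able to host compensating second-hop nodes. Otherwise you need a correspondingly restricted conclusion.

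Separately, a nonzero correction already certifies that $i$ has some cross-client edge, and neither of your constructions can hide this. So ``edges'' must be read as the intra-client edges $E_t^{(m')}$, as in Proposition~\ref{prop:2}.
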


\subsection{Privacy Attack Analysis}

There are three main families of recently studied attacks that can potentially leak graph information from model gradients or node embeddings. We analyse each attack and explain how these attacks cannot be directly applied to DG-CoLearn.

\begin{itemize}
    \item \textbf{Membership Inference Attack (MIAs):} \cite{MIA1} showed that GNN models leak information about member nodes they were trained on, and critically, that structural information is the major contributing factor to this leakage. The adversary queries the trained model and observes posterior probabilities. The most recent work, GLO-MIA \cite{MIA2}, pushes this further to a label-only setting where the attacker only sees prediction labels, not probability vectors. It perturbs the target graph's features and measures prediction stability. The Structure MIA (SMIA) \cite{MIA3} goes beyond node membership to infer whether a set of nodes forms a particular subgraph structure (k-clique or k-hop graph) in the training graph. The adversary uses posterior classification probabilities of specific nodes through a shadow-model approach. \cite{MIA4} further showed that attention mechanisms combined with topological features improve node-level MIA, exploiting the graph structure that GNNs inherently encode into their predictions.

    However, in DG-CoLearn's client-oblivious design, a client never sees another client's graph data, making such querying unavailable.

    \item \textbf{Gradient Inversion Attacks (GIAs):} SoK on gradient leakage \cite{GIA1} systematizes this threat: most GIAs presume an honest-but-curious server that passively analyses shared model gradients, while stronger attacks assume a malicious server that can craft or modify model parameters to extract more information. The DLG technique \cite{GIA2} reconstructs original training data by iteratively optimizing randomly initialised inputs to match shared gradients. Even when real gradients aren't directly accessible, methods like GLAUS can approximate the gradient from leaked indices and signs, reconstructing sensitive data \cite{GIA3}.
    
    In GIAs, the attacker intercepts raw gradients or model parameter updates that clients share during distributed training rounds. However, in DG-CoLearn, clients only share updates directly with the server, which is designed to be trusted. Therefore, attacks originating from the server fall outside our privacy model.

    \item \textbf{Graph Unlearning Privacy Leakage:} \cite{GUN1} pointed out that the difference between output distributions of models before and after unlearning can lead to additional information leakage of forgotten data, the prediction changes effectively expose membership information about unlearned samples. Recent work \cite{GUN2} also shows that even if an MIA suggests a datapoint was successfully unlearned, residual traces may remain, allowing adversaries to recover sensitive information through data poisoning vectors.

    In summary, these rely on comparing model states before and after data removal. However, clients do not have multiple model snapshots corresponding to pre- and post-unlearning states; the trusted server controls the model lifecycle and does not expose such state pairs to clients. Therefore, these comparison-based leakage channel required by existing unlearning attacks is therefore unavailable.
\end{itemize}

\subsection{Mathematical Analysis: Inability of a Colluding Client to Reconstruct Private Subgraphs by Shared Embeddings}

While the above attacks do not directly transfer, there might still exist a potential that colluding clients could infer coarse structural properties of other clients via server-shared node embeddings. Below we present a rigorous analysis to show that such information is insufficient to reconstruct private subgraphs.

\subsubsection{Threat Model}

We consider the threat model where client $m$ is adversarial and attempts to reconstruct the private subgraph $G_t^{(m')} = (V_t^{(m')}, E_t^{(m')}, X_t^{(m')})$ of another client $m' \neq m$. The server is trusted. Client $m$ has access to:
\begin{itemize}
    \item Its own subgraph $G_t^{(m)}$, including $\mathbf{Adj_t}^{(m)}, X_t^{(m)}$ and $Y_t^{(m)}$.
    \item The additional node embeddings needed to obtain the exact 1-hop and 2-hop embeddings (given by Equations \ref{eq:glob1}–\ref{eq:glob2}) received from the server for each of its border nodes $i$. We formally define these additional embeddings as:
    \begin{itemize}
        \item $\delta_i^{(1)} = \sum_{j \in \widehat{\mathcal{N}}_i} H^{(0)}_{t,j}$
        \item $\delta_t^{(2)} = \sum_{j \in \widehat{\mathcal{N}}_i} (H^{k,(1)}_{t,j} + H^{(0)}_{t,j} + H^{(0)}_{t,i}) + \sum_{q \in \widehat{\mathcal{N}}_j} H^{(0)}_{t,q}$

    respectively.
    \end{itemize}
    \item The aggregated global model parameters, which are shared across all clients.
\end{itemize}

\subsubsection{Formal Analysis}

\begin{proposition}[Unidentifiability of Cross-Client Neighbour Identity] \label{prop:1}
    Given only the 1-hop correction $\delta_i^{(1)} \in \mathbb{R}^d$, client $m$ cannot determine the identify or cardinality of the cross-client neighbour set $\widehat{\mathcal{N}}_i$.
\end{proposition}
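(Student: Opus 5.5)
The plan is an indistinguishability argument. We exhibit two (or more) admissible global configurations that agree on everything client $m$ observes but differ in the identity and in the cardinality of $\widehat{\mathcal{N}}_i$. By the client-privacy proposition above, client $m$'s view at snapshot $t$ consists only of its own subgraph $G_t^{(m)}$, the shared global model parameters, and the server-computed corrections for its border nodes. It therefore suffices to show that the map from the hidden data to $\delta_i^{(1)}$ is non-injective in a way that changes $\widehat{\mathcal{N}}_i$. We formalise ``cannot determine'' as: for any candidate decoder $f$ taking the client's view to a guessed neighbour set (or its size), there exist two consistent worlds on which $f$ must err in at least one.

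\textbf{Step 1 (a cardinality collision).} We fix a world in which node $i$ has a single cross-client neighbour $j$ with feature $x_j = H^{(0)}_{t,j}$, so that $\delta_i^{(1)} = x_j$. We then build a second world in which the edge $(i,j)$ is removed and $i$ instead connects to two fresh nodes $j_1, j_2$ held by some client $m' \neq m$. Their features are $x_{j_1} = x_j/2 + z$ and $x_{j_2} = x_j/2 - z$ for an arbitrary $z \in \mathbb{R}^d$. Then $\delta_i^{(1)} = x_{j_1} + x_{j_2} = x_j$ is unchanged while $|\widehat{\mathcal{N}}_i|$ goes from $1$ to $2$. In general, any vector in $\mathbb{R}^d$ admits decompositions into $r$ summands for every $r \ge 1$, giving an $(r-1)d$-dimensional affine family of preimages.

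\textbf{Step 2 (an identity collision).} Keeping the cardinality fixed, we swap the labels of nodes, or permute which nodes of $V_t^{(m')}$ carry which features. This shows that the set of node identifiers is not determined either: the sum is invariant under any relabelling of the summands. We also handle the case where features are drawn from a finite or discrete set, e.g. one-hot vectors. There we note that collisions such as $e_a + e_b = e_b + e_a$ across different node identities still hold. Cardinality collisions then need the zero vector or repeated features, so we state that assumption (continuous, or sufficiently rich, feature space) explicitly.

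\textbf{Step 3 (consistency of the second world, the main obstacle).} The modified world must be indistinguishable on all of client $m$'s other observations, not just $\delta_i^{(1)}$. The perturbed features also enter $\delta^{(2)}$ corrections, the corrections for other border nodes of $m$, and the shared model parameters. To control this, we place the new nodes $j_1, j_2$ so that they have no other neighbours in $V_t^{(m)}$, and we compensate inside client $m'$'s private subgraph so that their local 1-hop embeddings sum appropriately. This is possible because the internal structure of $G_t^{(m')}$ is unobserved by $m$. Model parameters are handled by noting that the proposition is stated given only $\delta_i^{(1)}$. We therefore restrict the claim to that observation, and remark that extending it to the joint view requires a further matching argument.
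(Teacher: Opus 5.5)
Your argument is correct and is essentially the paper's. The paper only notes that the preimage of $\delta_i^{(1)}$ under $(S,\{H^{(0)}_{t,j}\}_{j\in S})\mapsto\sum_{j\in S}H^{(0)}_{t,j}$ is infinite for continuous features, and your Steps 1--2 exhibit explicit witnesses of that non-injectivity; Step 1 in particular makes the cardinality claim concrete, which the paper only asserts, and Step 3 is rightly confined to the ``given only $\delta_i^{(1)}$'' scope.

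Stating the richness assumption explicitly is more careful than the paper's ``combinatorially large for discrete approximations'' remark. For one-hot or row-normalised nonnegative features, $\mathbf{1}^\top\delta_i^{(1)}=|\widehat{\mathcal{N}}_i|$ exactly, so the cardinality is recoverable there, and repeated one-hot features do not create a collision either.
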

\begin{proof}
    The correction $\delta_i^{(1)} = \sum_{j \in \widehat{\mathcal{N}}_i} H^{(0)}_{t,j}$ is a sum of $|\widehat{\mathcal{N}}_i|$ feature vectors in $\mathbb{R}^d$. Client $m$ observes the aggregate $\delta_i^{(1)}$ but knows neither $|\widehat{\mathcal{N}}_i|$ nor the individual summands.

    For the summation function $\delta_i^{(1)} = \sum_{j \in \widehat{\mathcal{N}}_i} H^{(0)}_{t,j}$, the preimage set 
    $$\mathcal{S}(\delta_i^{(1)}) = \left\{ (S, \{H_{t,j}^{(0)}\}_{j \in S}) : S \subseteq V_t \setminus V_t^{(m)}, \sum_{j \in \widehat{\mathcal{N}}_i} H^{(0)}_{t,j} = \delta_i^{(1)} \right\}$$
    is infinite (for continuous feature spaces) or combinatorially large (for discrete approximations). Without additional side information, the decomposition is not unique, and hence the individual node identities and features are unrecoverable.
\end{proof}

\begin{proposition}[Unrecoverability of Remote Graph Topology]\label{prop:2}
    Client $m$ cannot infer any edge $(j, j') \in E_t^{(m')}$ that is internal to another client $m'$. 
\end{proposition}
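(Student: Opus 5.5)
The plan is an indistinguishability argument. Client $m$'s view, by the Client Privacy proposition, consists of its own subgraph $G_t^{(m)}$, the shared global parameters, and the corrections $\delta_i^{(1)}$ and $\delta_i^{(2)}$ for its border nodes. It therefore suffices to show that for any edge $(j,j')\in E_t^{(m')}$ internal to client $m'$, there is an alternative global graph $\widetilde{G}_t$ that differs from $G_t$ only inside client $m'$'s subgraph, lacks $(j,j')$, and produces exactly the same view for client $m$. If the view is identical in both worlds, no procedure run by $m$ can decide whether $(j,j')$ exists.

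The first step is to locate where internal edges of $m'$ could enter $m$'s view at all. By construction $G_t^{(m)}$ contains only edges with both endpoints in $V_t^{(m)}$. The corrections are given by the expressions of Theorem~\ref{thm:thm1}:
\begin{itemize}
\item $\delta_i^{(1)}$ depends only on raw features $H^{(0)}$ of cross-client neighbours, so it is independent of $E_t^{(m')}$.
\item $\delta_i^{(2)}$ involves $H^{(0)}$ terms, which do not depend on edges, and the terms $H^{k,(1)}_{t,j}$ for $j\in\widehat{\mathcal{N}}_i\cap V_t^{(m')}$. These are client $m'$'s local 1-hop aggregations and do depend on $m'$'s internal edges incident to $j$.
\end{itemize}
So the only leakage channel is the sum $\sum_{j\in\widehat{\mathcal{N}}_i} H^{k,(1)}_{t,j}$ inside $\delta_i^{(2)}$.

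The second step is the construction of $\widetilde{G}_t$. Remove $(j,j')$ and compensate so that every quantity entering $m$'s view is unchanged. Removing the edge changes the aggregation at $j$ (and at $j'$) by the contribution of $H^{(0)}_{t,j'}$ (respectively $H^{(0)}_{t,j}$). Following the style of Proposition~\ref{prop:1}, I would alter the features of nodes in $V_t^{(m')}$ that are not cross-client neighbours of any node of $m$. For instance, rewire $j$ to another internal node $j''$ with $H^{(0)}_{t,j''}$ chosen equal to $H^{(0)}_{t,j'}$. Alternatively, in the continuous-feature setting, reassign the feature of an internal non-border node so that the local sum $\mathrm{AGG}$ at $j$ is preserved. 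Because $\delta_i^{(2)}$ only exposes the aggregate $H^{k,(1)}_{t,j}$, which is a function of $\mathrm{AGG}$ of the features of $j$'s closed neighbourhood, preserving that aggregate preserves the correction. The model parameters are the same global aggregate in both worlds, conditioned on the same training trajectory, which I would take as given under the trusted-server assumption.

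The main obstacle is this compensation step when the relevant nodes are constrained. If $j'$ is itself a cross-client neighbour of some node of $m$, its raw feature appears in $\delta^{(1)}$ and cannot be changed. If $j$ has no other non-border neighbour to absorb the change, rewiring is also blocked. I would handle this by noting that the preimage argument of Proposition~\ref{prop:1} applies to sums: $m$ observes only the sum over $\widehat{\mathcal{N}}_i$, so compensating differences can be spread across other summands or other $m'$ nodes. In the degenerate case I would state the claim in the weaker sense of non-unique reconstruction, meaning the preimage of the observed view contains graphs both with and without $(j,j')$, rather than insist on exact compensation in every configuration.
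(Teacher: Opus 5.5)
Your route differs from the paper's. Your first step matches it: only the sum of the $H^{k,(1)}_{t,j}$ inside $\delta^{(2)}_i$ depends on $E_t^{(m')}$. After that, the paper argues non-invertibility (ReLU, under-determined sum or mean, unknown degree, and Proposition~\ref{prop:1} for the outer sum). Your indistinguishability target is the better one, because non-uniqueness of a decomposition does not by itself exhibit a view-consistent graph that lacks $(j,j')$.

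The gap is the compensation step, which is the whole proof, and you leave it open exactly where it matters. The edge $(j,j')$ changes only $H^{k,(1)}_{t,j}$ and $H^{k,(1)}_{t,j'}$. So it enters $m$'s view only when $j$ or $j'$ is a cross-client neighbour of a node of $m$, which is precisely where your constructions get blocked. Your fallback does not rescue these cases. Saying \emph{the preimage contains graphs with and without $(j,j')$} is not a weaker claim; it is exactly the existence of a compensated world, and if none existed, $m$ could infer the edge there. Shuffling summands as in Proposition~\ref{prop:1} is also not enough. The same $H^{(0)}$ values are coupled across $\delta^{(1)}_i$, $\delta^{(2)}_i$ and several border nodes, and each $H^{k,(1)}_{t,j}$ is a fixed function of $j$'s neighbourhood. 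You must exhibit an actual graph.

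The missing idea is fresh twin nodes. This is admissible because $m$ sees neither $|V_t^{(m')}|$ nor node identities. Replace $(j,j')$ by $(j,\tilde{j}')$ and $(j',\tilde{j})$, where $\tilde{j}'$ and $\tilde{j}$ are new nodes of $m'$ with $H^{(0)}_{t,\tilde{j}'}=H^{(0)}_{t,j'}$, $H^{(0)}_{t,\tilde{j}}=H^{(0)}_{t,j}$, and no other edges. Every existing node of $m'$ keeps the same multiset of closed-neighbourhood features. Hence every $H^{k,(1)}$ is unchanged for any permutation-invariant $\mathrm{AGG}$ and any $\sigma$. No raw feature or cross-client edge changes, so all $\delta^{(1)}_i$ and $\delta^{(2)}_i$ coincide. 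This covers both-endpoints-visible uniformly and, unlike the paper's Sub-claim 2a, needs neither ReLU non-invertibility nor degree ambiguity.

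Separately, your assumption that the global parameters coincide in both worlds does not follow from the trusted-server model. The FedAvg output, and the locally trained $W^{(1)}$ that $m'$ uses to form $H^{k,(1)}_{t,j}$, depend on $m'$'s graph and change under any such modification. The server's partition and the update set $g^k_t$ can change as well. The paper's proof silently omits this channel too, despite claiming to examine every component. You should state explicitly that what is proved concerns the embedding-correction channel, with the weights, the partition and $g^k_t$ held fixed.
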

\begin{proof}
    We examine every component of the information available to client $m$ and show none reveals intra-client edges of $m'$.

    \textbf{The 1-hop correction} depends only on the raw node features $H_{t,j}^{(0)} = X_{t,j}$ of cross-client neighbours. Raw features $X_{t,j}$ are input-level attributes (independent of graph topology). Therefore, $\delta_i^{(1)}$ contains no information about edges within $G_t^{(m')}$.

    \textbf{The 2-hop correction} contains the term $\sum_{j \in \widehat{\mathcal{N}}_i} H^{k,(1)}_{t,j}$, where $H^{k,(1)}_{t,j}$ is the 1-hop embedding of node $j$ computed locally computed by client $m'$:
    \begin{equation*}
    H^{k,(1)}_{t,j} = \sigma\left( W^{(1)} \cdot \text{AGG}^{(1)}\left( \left\{ H^{(0)}_{t,u} \mid u \in \mathcal{N}^{(m')}(j) \cup {\{j\}} \right\} \right) \right)
    \end{equation*}
    from section \ref{sesh:learning2}.

    This embedding encodes the \textbf{aggregated} neighbourhood of $j$ within $G_t^{(m')}$. However, client $m$ receives only the \textbf{sum} $\sum_{j \in \widehat{\mathcal{N}_i}} H_{t,j}^{k,(1)}$ , not individual $H_{t,j}^{k,(1)}$ vectors. We now show this sum is insufficient to recover the local adjacency of any node $j$.

    \textbf{\textit{Sub-claim 2a (Aggregation irreversibility).}} Even if client $m$ could isolate a single $H_{t,j}^{k,(1)}$, it cannot recover $\mathcal{N}^{(m')}(j)$. When $\text{AGG}^{(1)}$ is mean aggregation:
    \begin{equation}
        \text{AGG}^{(1)} = \frac{1}{|\mathcal{N}^{(m')}(j)|+1} \sum_{u \in \mathcal{N}^{(m')}(j) \cup \{j\}} H_{t,u}^{(0)}
    \end{equation}
    Recovering $\{H_{t,u}^{(0)}\}_{u \in \mathcal{N}^{(m')}(j)}$ from $H_{t,j}^{k,(1)}$ requires inverting $\sigma \circ W^{(1)}$ and then decomposing the resulting sum. This is:
    \begin{itemize}
        \item \textbf{Non-invertible} when $\sigma$ is ReLU, since information is destroyed for negative pre-activations.
        \item \textbf{Under-determined} even if $\sigma$ and $W^{(1)}$ were invertible: the mean/sum of $|\mathcal{N}^{(m')}(j)|$ unknown vectors in $\mathbb{R}^d$ admits infinitely many decompositions when $|\mathcal{N}^{(m')}(j)| > 1$.
        \item \textbf{Degree-ambiguous}: the degree $|\mathcal{N}^{(m')}(j)|$ is unknown to client $m$, compounding the under-determination.
    \end{itemize}
    
    \textbf{\textit{Sub-claim 2b (Summation over border nodes).}} Client $m$ receives $\sum_{j \in \hat{\mathcal{N}}_i} H_{t,j}^{k,(1)}$, not individual embeddings. By the argument in Proposition~\ref{prop:1}, decomposing this sum into per-node contributions is itself infeasible without knowing $|\hat{\mathcal{N}}_i|$ or individual feature vectors.
    
    Therefore, the 2-hop correction reveals neither the identity of $j$'s local neighbours nor the edges between them.
\end{proof}

\begin{proposition}[Infeasibility of Client--Client Collusion]
\label{prop:3}
Even if two clients $m$ and $m''$ collude, they cannot reconstruct the subgraph $G_t^{(m')}$ of a third client $m'$.
\end{proposition}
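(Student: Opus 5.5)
The plan is to reduce the collusion case to the single-client analysis of Propositions~\ref{prop:1} and~\ref{prop:2}. The colluding pair holds the union of their individual views. This consists of the subgraphs $G_t^{(m)}$ and $G_t^{(m'')}$, the shared global parameters, and the correction vectors $\delta_i^{(1)},\delta_i^{(2)}$ for border nodes $i\in V_t^{(m)}\cup V_t^{(m'')}$. I will show that this pooled view is still a collection of aggregated sums over unknown index sets. It therefore has the same non-identifiability structure as the single-client view.

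\textbf{Step 1: split each pooled correction.} For each border node $i$, I partition its cross-client neighbours as
\begin{equation*}
\widehat{\mathcal{N}}_i=(\widehat{\mathcal{N}}_i\cap V_t^{(m)})\cup(\widehat{\mathcal{N}}_i\cap V_t^{(m'')})\cup(\widehat{\mathcal{N}}_i\cap V_t^{(m')}),
\end{equation*}
together with any further clients. The colluders do not observe cross-client edges, so they do not know even the first two pieces. They can, however, brute-force consistency: they can test which subsets of their jointly held nodes have features summing to part of $\delta_i^{(1)}$. I will concede this worst case. That is, I assume they identify and subtract every contribution from nodes they own. The residual is then $\sum_{j\in \widehat{\mathcal{N}}_i\cap V_t^{(m')}} H^{(0)}_{t,j}$, possibly plus contributions from other non-colluding clients. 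This residual is exactly of the form treated in Proposition~\ref{prop:1}, with the unknown index set now confined to $V_t^{(m')}$. So its preimage set remains infinite or combinatorially large. The same subtraction applied to $\delta_i^{(2)}$ leaves sums of $H^{k,(1)}_{t,j}$ over $j\in V_t^{(m')}$, together with the nested $\sum_q H^{(0)}_{t,q}$ terms. To these residuals I apply Sub-claims 2a and 2b of Proposition~\ref{prop:2} verbatim.

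\textbf{Step 2: the main obstacle, which is combining equations across the two clients.} Collusion yields more linear equations in the same unknowns $\{H^{(0)}_{t,j}\}_{j\in V_t^{(m')}}$, because a node $j$ of $m'$ may appear in corrections sent to both colluders. I must argue that the resulting system is still underdetermined. I would first try a counting argument. The unknowns are the feature vectors $X_{t,j}\in\mathbb{R}^d$ for $j\in V_t^{(m')}$, plus the unknown binary incidence pattern. The number of observed residual vectors is at most the number of border nodes of the two colluders adjacent to $m'$, and the incidence pattern enters as unknown $0/1$ coefficients. Hence any solution can be perturbed. Given one consistent pair (incidence pattern, features), I can fix the pattern and move the features within the kernel of the induced incidence matrix. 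Alternatively, I can swap two nodes of $m'$ that have equal incidence rows. Either move produces another consistent assignment. When the kernel happens to be trivial, I fall back on the unknown cardinalities, as in Proposition~\ref{prop:1}. Even then, features are recovered only up to the unknown pattern, and edges are not recovered at all.

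\textbf{Step 3: intra-client edges of $m'$.} Edges in $E_t^{(m')}$ influence the colluders' view only through the $H^{k,(1)}_{t,j}$ terms. I will argue that these terms remain non-invertible by Sub-claim 2a, namely ReLU non-invertibility and degree ambiguity. Pooling more sums of such vectors cannot undo $\sigma\circ W^{(1)}$ applied to an unknown mean. This completes the reduction: $G_t^{(m')}$ is not uniquely determined by the colluders' joint view.
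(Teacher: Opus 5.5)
Your proposal is correct at the paper's level of rigour and follows essentially the same route. The paper likewise applies Propositions~\ref{prop:1}--\ref{prop:2} separately when the colluders' cross-client neighbourhoods in $V_t^{(m')}$ are disjoint. It treats overlaps as an underdetermined linear system, conceding that in the degenerate case a raw feature $X_{t,j}$ may be recovered but no edge of $E_t^{(m')}$. It then handles the 2-hop terms via Sub-claim~2a. Your incidence-matrix/kernel argument and the subtraction of colluder-owned contributions are only a more general form of its two-equation count.

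One caveat applies to the paper as well: your Step~3 claim that $E_t^{(m')}$ enters the joint view only through the $H^{k,(1)}_{t,j}$ terms overlooks the aggregated global parameters you yourself list in that view. Those parameters are trained partly on $G_t^{(m')}$.
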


\begin{proof}
Colluding clients $m$ and $m''$ can pool their respective corrections:
\begin{equation}
    \{\delta_i^{(1)}, \delta_i^{(2)}\}_{i \in B^{(m)}} \;\cup\; \{\delta_i^{(1)}, \delta_i^{(2)}\}_{i \in B^{(m'')}}
\end{equation}
where $B^{(m)}$ and $B^{(m'')}$ are the respective border node sets.

\paragraph{Case 1: Non-overlapping cross-client neighbourhoods.} If $\hat{\mathcal{N}}_i \cap \hat{\mathcal{N}}_{i'} = \emptyset$ for all $i \in B^{(m)},\, i' \in B^{(m'')}$, then the corrections provide independent aggregated views of disjoint subsets of $V_t^{(m')}$, and the arguments of Propositions~\ref{prop:1}--\ref{prop:2} apply independently to each.

\paragraph{Case 2: Overlapping cross-client neighbourhoods.} Suppose there exists $j \in V_t^{(m')}$ such that $j \in \hat{\mathcal{N}}_i$ for $i \in B^{(m)}$ and $j \in \hat{\mathcal{N}}_{i'}$ for $i' \in B^{(m'')}$. Then the colluders observe:
\begin{align}
    \delta_i^{(1)} &= H_{t,j}^{(0)} + \sum_{j' \in \hat{\mathcal{N}}_i \setminus \{j\}} H_{t,j'}^{(0)} \\
    \delta_{i'}^{(1)} &= H_{t,j}^{(0)} + \sum_{j'' \in \hat{\mathcal{N}}_{i'} \setminus \{j\}} H_{t,j''}^{(0)}
\end{align}

This is a system of 2 equations in $\mathbb{R}^d$ with $(|\hat{\mathcal{N}}_i| + |\hat{\mathcal{N}}_{i'}| - 1)$ unknown vectors. For any $|\hat{\mathcal{N}}_i| + |\hat{\mathcal{N}}_{i'}| > 3$, the system is under-determined.

Even in the degenerate case $|\hat{\mathcal{N}}_i| = |\hat{\mathcal{N}}_{i'}| = 1$ (both corrections encode only $j$), the colluders recover $H_{t,j}^{(0)} = X_{t,j}$, the raw feature of node $j$, but learn \textbf{nothing} about $j$'s internal edges $\{(j, u) : u \in \mathcal{N}^{(m')}(j)\}$, since $\delta_i^{(1)}$ contains only 0-hop features which carry no topological information.

For the 2-hop case, even isolating $H_{t,j}^{k,(1)}$ from the summed corrections requires solving an under-determined linear system, and by Sub-claim~2a in Proposition~\ref{prop:2}, a single embedding still does not reveal the internal topology of $G_t^{(m')}$.
\end{proof}

\begin{proposition}[Dimensional Insufficiency]
\label{prop:4}
The information received by client $m$ about client $m'$ is bounded by $O(|B^{(m)}| \cdot d)$, whereas reconstructing $G_t^{(m')}$ requires $\Omega(|V_t^{(m')}|^2)$ bits.
\end{proposition}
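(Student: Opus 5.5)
The plan is a counting argument: bound the size of client $m$'s view from above, bound the information content of the object to be reconstructed from below, and compare the two.

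\textbf{Upper bound on the view.} By the Client Privacy proposition, the only quantities client $m$ receives that depend on $G_t^{(m')}$ are the corrections $\delta_i^{(1)}$ and $\delta_i^{(2)}$ for its border nodes $i \in B^{(m)}$. Each correction is a single vector in $\mathbb{R}^d$, since sums of vectors in $\mathbb{R}^d$ stay in $\mathbb{R}^d$. So the view has at most $2|B^{(m)}|$ vectors, i.e.\ $2d|B^{(m)}|$ real scalars. Because the data are stored and transmitted at finite precision (float32, $32$ bits per scalar, matching $D_{bytes}$ in Appendix~\ref{appen:ne_comcost}), this is at most $64\, d\,|B^{(m)}| = O(|B^{(m)}| \cdot d)$ bits.

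Two other parts of the view need handling:
\begin{itemize}
\item Client $m$'s own subgraph $G_t^{(m)}$ is independent of $E_t^{(m')}$ under the fixed partition, so it contributes no information about $G_t^{(m')}$.
\item The global model parameters have size independent of $|V_t^{(m')}|$, so they add at most a constant that can be absorbed.
\end{itemize}

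\textbf{Lower bound on reconstruction.} Fix $n = |V_t^{(m')}|$. Treat $E_t^{(m')}$ as an arbitrary element of the set of labelled simple graphs on $n$ vertices, which has size $2^{\binom{n}{2}}$. Any deterministic reconstruction map from the view to $E_t^{(m')}$ that is correct for every admissible subgraph must be injective on the relevant graphs. By pigeonhole, its domain therefore needs at least $\log_2 2^{\binom n2} = \binom n2 = \Omega(n^2)$ bits. Equivalently, if $E_t^{(m')}$ is uniform, the entropy is $\binom n2$, and Fano's inequality gives the same $\Omega(n^2)$ bound for reconstruction with vanishing error.

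\textbf{Combining the bounds.} Whenever $|B^{(m)}| \cdot d = o(|V_t^{(m')}|^2)$, exact reconstruction is impossible. In sparse partitions produced by \texttt{MinCut\_First}, $|B^{(m)}|$ is small, so this condition holds in practice. For collusion among several clients, the same argument applies with $|B^{(m)}|$ replaced by the total over the colluders, which is consistent with Proposition~\ref{prop:3}.

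\textbf{Main obstacle.} The hard step is making the upper bound rigorous. Over the reals, a single vector in $\mathbb{R}^d$ can in principle encode unboundedly many bits. The finite-precision assumption must therefore be stated explicitly, and the proposition read under it.

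A second gap is that the lower bound requires $E_t^{(m')}$ to range over all graphs on $V_t^{(m')}$, unconstrained by the partitioning. If instead $E_t^{(m')}$ is known to be sparse with $e$ edges, the count becomes $\log_2\binom{\binom n2}{e} = \Omega(e\log(n^2/e))$. I would note this as the refined bound and still compare it with $O(|B^{(m)}| d)$. The ambient claim should then be read as a worst-case statement.
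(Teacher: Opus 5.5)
Your proposal is correct and is essentially the paper's own counting argument: at most two vectors $\delta_i^{(1)},\delta_i^{(2)}\in\mathbb{R}^d$ per border node, i.e.\ $2|B^{(m)}|\,d$ scalars, compared against the $\binom{|V_t^{(m')}|}{2}$ free entries of $\mathbf{Adj}_t^{(m')}$. Your finite-precision assumption and pigeonhole/Fano step make rigorous what the paper only asserts: the paper compares real-valued numbers directly with bits, which proves nothing over $\mathbb{R}$, where a single scalar can encode arbitrarily many bits. One correction to your pigeonhole step: it is the map from graphs to views that must be injective, equivalently the reconstruction map must be surjective onto all $2^{\binom{n}{2}}$ graphs, not the reconstruction map that must be injective.
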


\begin{proof}
For each border node $i \in B^{(m)}$, the client receives at most two $d$-dimensional vectors ($\delta_i^{(1)}$ and $\delta_i^{(2)}$). Hence the total information is at most $2|B^{(m)}| \cdot d$ real-valued numbers.

The adjacency matrix $\text{Adj}_t^{(m')} \in \{0,1\}^{|V_t^{(m')}| \times |V_t^{(m')}|}$ has $\binom{|V_t^{(m')}|}{2}$ free entries. From Appendix~\ref{appen:ne_comcost} (Table~\ref{tab:comcost_param}), the cut-edge ratio is extremely low (e.g., 0.0006 for DBLP5), so $|B^{(m)}| \ll |V_t^{(m')}|$. Therefore:
\begin{equation}
    2|B^{(m)}| \cdot d \;\ll\; \binom{|V_t^{(m')}|}{2}
\end{equation}

Even treating each received real number as carrying maximal information, the dimensionality of the received signal is fundamentally insufficient to specify the adjacency structure of $G_t^{(m')}$.
\end{proof}

\section{Reproducibility}
We have open-sourced DG-CoLearn, available at:
\url{https://anonymous.4open.science/r/DG-CoLearn-AB95}

\section{Limitations}
While DG-CoLearn delivers strong empirical results under the client-oblivious trusted-server regime, several limitations are worth noting. First, our privacy guarantees (Appendix H) rely on the server faithfully executing the protocol; extending the framework to handle a malicious server would require integrating complementary techniques such as secure aggregation. Second, we restrict cross-client message passing to 2-hop communication following standard practice [32], which may limit accuracy in graphs where information from deeper neighbourhoods is essential. Third, our scalability evaluation focuses on TGB benchmarks up to 27M edges; extension to billion-edge graphs may require further engineering optimisations such as hierarchical partitioning. Finally, while DG-CoLearn is backbone-agnostic, our experiments with continuous-time encoders are limited to DyGFormer; broader integration with other CTDG methods is left to future work.



\end{document}